\newcommand{\ww}{\mathbf{w}}
\newcommand{\WW}{\mathbf{W}}
\newcommand{\xx}{\mathbf{x}}
\newcommand{\fc}{ \thinspace C_{t} + R_{1t} \thinspace }
\renewcommand{\sc}{ \thinspace R_{2t} + M_{t} \thinspace }
\newtheorem{theorem}{Theorem}
\newtheorem{lemma}[theorem]{Lemma}
\newtheorem{definition}[theorem]{Definition}
\newtheorem{corollary}[theorem]{Corollary}
\def\BState{\State\hskip-\ALG@thistlm}
\def \K {\mathcal{K}}
\def \X {\mathcal{X}}
\def \R {\mathbb{R}}
\def \RR {\mathbf{R}}
\def \I {\mathbb{I}}
\def \E {\mathbb{E}}
\def \ms {\sigma ( yf( \xx ) - \rho )}
\def \ps {\sigma ( yf( \xx ) + \rho )}
\title{Online Active Learning of Reject Option Classifiers}
\author{Kulin Shah, Naresh Manwani \\
Machine Learning Lab, KCIS, IIIT Hyderabad, India \\
kulin.shah@students.iiit.ac.in, naresh.manwani@iiit.ac.in
}
\begin{document}

\maketitle

\begin{abstract}
  Active learning is an important technique to reduce the number of labeled examples in supervised learning. Active learning for binary classification has been well addressed in machine learning. However, active learning of the reject option classifier remains unaddressed. In this paper, we propose novel algorithms for active learning of reject option classifiers. We develop an active learning algorithm using double ramp loss function. We provide mistake bounds for this algorithm. We also propose a new loss function called double sigmoid loss function for reject option and corresponding active learning algorithm. We offer a convergence guarantee for this algorithm. We provide extensive experimental results to show the effectiveness of the proposed algorithms. The proposed algorithms efficiently reduce the number of label examples required.
\end{abstract}

\section{Introduction}
In standard binary classification problems, algorithms return prediction on every example. For any misprediction, the algorithms incur a cost. Many real-life applications involve very high misclassification costs. Thus, for some confusing examples, not predicting anything may be less costly than any misclassification. The choice of not predicting anything for an example is called {\em reject option} in machine learning literature. Such classifiers are called reject option classifiers.

Reject option classification is very useful in many applications. Consider a doctor diagnosing a patient based on the observed symptoms and preliminary diagnosis. If there is an ambiguity in observations and preliminary diagnosis, the doctor can hold the decision on the treatment.  She can recommend to take advanced tests or consult a specialist to avoid the risk of misdiagnosing the patient. The holding response of the doctor is the same as to reject option for the specific patient \citep{Rocha2011}. On the other hand, the doctor's misprediction can cost huge money for further treatment or the life of a person. In another example, a banker can use the reject option while looking at the loan application of a customer \citep{Rosowsky2013}. A banker may choose not to decide based on the information available because of high misclassification cost, and asks for further recommendations or a credit bureau score from the stakeholders. Application of reject option classifiers include healthcare \cite{btn349,Rocha2011}, text categorization \cite{1234113}, crowdsourcing \cite{Qunwei2017} etc.

Let $\X \subset \R^d$ be the feature space and $\{+1,-1\}$ be the label space. Examples of the form $(\xx,y)$ are generated from an unknown fixed distribution on $\X \times \{+1,-1\}$. A reject option classifier can be described with the help of a function $f:\X \rightarrow \R$ and a rejection width parameter $\rho \in \R_+$ as below.
\begin{align}
    \label{eq:reject_classifier}
    h_\rho(f(\xx)) = 1.\I_{\{f(\xx)>\rho\}} -1.\I_{\{f(\xx)<-\rho\}}-0.\I_{\{ | f(\xx) | \leq \rho\}}
\end{align}
 The goal is to learn $f(.)$ and $\rho$ simultaneously. For a given example $(\xx,y)$, the performance of reject option classifier $h_\rho(f(.))$ is measured using following loss function.
 \begin{align}
   \label{eq:0-d-1}
   L_d(yf(\xx),\rho) = \I_{\{yf(\xx) \leq -\rho\}} + d\I_{\{|f(\xx)|\leq \rho\}}
 \end{align}
 where $d\in (0,0.5)$ is the cost of rejection. A reject option classifier is learnt by minimizing the risk (expectation of loss) under $L_d$. As $L_d$ is not continuous, optimization of empirical risk under $L_d$ is difficult.  \citet{Bartlett:2008, wegkamp2011} propose a convex surrogate of $L_d$ called generalized hinge loss. They learn the reject option classifier using risk minimization algorithms based on generalized hinge loss. \citet{Grandvalet2008} propose another convex surrogate of $L_d$ called double hinge loss and corresponding risk minimization approach for reject option classification. \citet{Manwani15,shah2019sparse} propose double ramp loss based approaches for reject option classification. Double ramp loss is a non-convex bounded loss function. All these approaches assume that we have plenty of labeled data available.

In general, classifiers learned with a large amount of training data can give better generalization on testing data. However, in many real-life applications, it can be costly and difficult to get a large amount of labeled data. Thus, in many cases, it is desirable to ask the labels of the examples selectively. This motivates the idea of active learning. Active learning selects more informative examples and queries labels of those examples.  Active learning of standard binary classifiers has been well-studied \citep{Dasgupta:2009, Bachrach:1999, Tong:2002}. In \citet{El-Yaniv:2012}, authors reduce active learning for the usual binary classification problem to learning a reject option classifier to achieve faster convergence rates. However, active learning of reject option classifiers has remained an unaddressed problem. In this paper, we propose online active learning algorithms to reject option classification.

Let us reconsider the example where the banker uses the reject option classifier for selecting the loan applications. Consider a loan application that satisfies the basic requirements. Thus, the banker is not clear about using the hold option. On the other hand, she is also not sure enough to approve the application. Such cases are instrumental in defining the separation rule between accepting the loan application and holding it for further investigation. 
This motivates us to think that one can use active learning to ask the labels of selective examples as described above while learning the reject option classifier.

A broad class of active learning algorithms is inspired by the concept of a margin between the two categories. Thus, an example, which falls in the margin area of the current classifier, carries more information about the decision boundary. On the other hand,  examples which are correctly classified with good margin or misclassified by a good margin, give less knowledge of the decision boundary. Margin examples can bring more changes to the existing classifier. Thus, querying the label of margin examples is more desirable than the other two kinds of examples. 

A reject option classifier can be viewed as two parallel surfaces with the rejection area in between. Thus, active learning of the reject option classifier becomes active learning of two surfaces in parallel with a shared objective. This shared objective is nothing but to minimize the sum of $L_d$ losses over a sequence of examples. 
 In \cite{Manwani15}, the authors propose a risk minimization approach based on double ramp loss ($L_{dr}$) for learning the reject option classifier. In \cite{Manwani15}, it is shown that at the optimality, the two surfaces can be represented using only those examples which are close to them. Examples that are far from the two surfaces do not participate in the representation of the surfaces. This motivates us to use double ramp loss for developing an active learning approach to reject option classifiers. 

\subsection{Our Contributions}
We make the following contributions in this paper. 
\begin{itemize}
    \item We propose an active learning algorithm based on double ramp loss $L_{dr}$ to learn a linear and non-linear classifier. We give bounds to the number of rejected examples and misclassification rates for un-rejected examples.
    \item We propose a smooth non-convex loss called double sigmoid loss ($L_{ds}$) for reject option classification.
    \item We propose an active learning algorithm based on $L_{ds}$ to learn both linear and non-linear classifiers. We also give convergence guarantees for the proposed algorithm.
    \item We present extensive simulation results for both proposed active learning algorithms for linear as well as non-linear classification boundaries. 
\end{itemize}


\section{Proposed Approach: Active Learning Inspired by Double Ramp Loss}
\label{section:dr_active}
Active learning algorithm does not ask the label in every trial. We denote the instance presented to algorithm at trial $t$ by $\xx_{t}$. Each $\xx_{t} \in \mathcal{X}$ is associated with a unique label $y_{t} \in \{-1, 1\}$. The algorithm calculates $f_{t}(\xx_{t})$ and outputs the decision using eq.(\ref{eq:reject_classifier}).  Based on $f_{t}(\xx_{t})$, the active learning algorithm decides whether to ask label or not.  
\cite{pmlr-v5-guillory09a} shows that online active learning algorithms can be viewed as stochastic gradient descent on non-convex loss function therefore, we use a non-convex loss function {\em Double ramp loss} $L_{dr}$ \citep{Manwani15} to derive our first active learning approach. $L_{dr}$ is defined as follows.   
\begin{equation*}
\begin{aligned}
 L_{dr}(yf(\xx), {\rho}) = d \Big{[}\big{[}1-yf(\xx)
+\rho\big{]}_+ - \big{[}-1-yf(\xx)+\rho\big{]}_+\Big{]}\\
+(1-d) \; \Big{[}\big{[}1
-yf(\xx)-\rho\big{]}_+ - \big{[}-1-yf(\xx)-\rho\big{]}_+\Big{]}
\end{aligned}
\end{equation*}
\begin{figure}
    \centering
    \includegraphics[width=0.7\columnwidth]{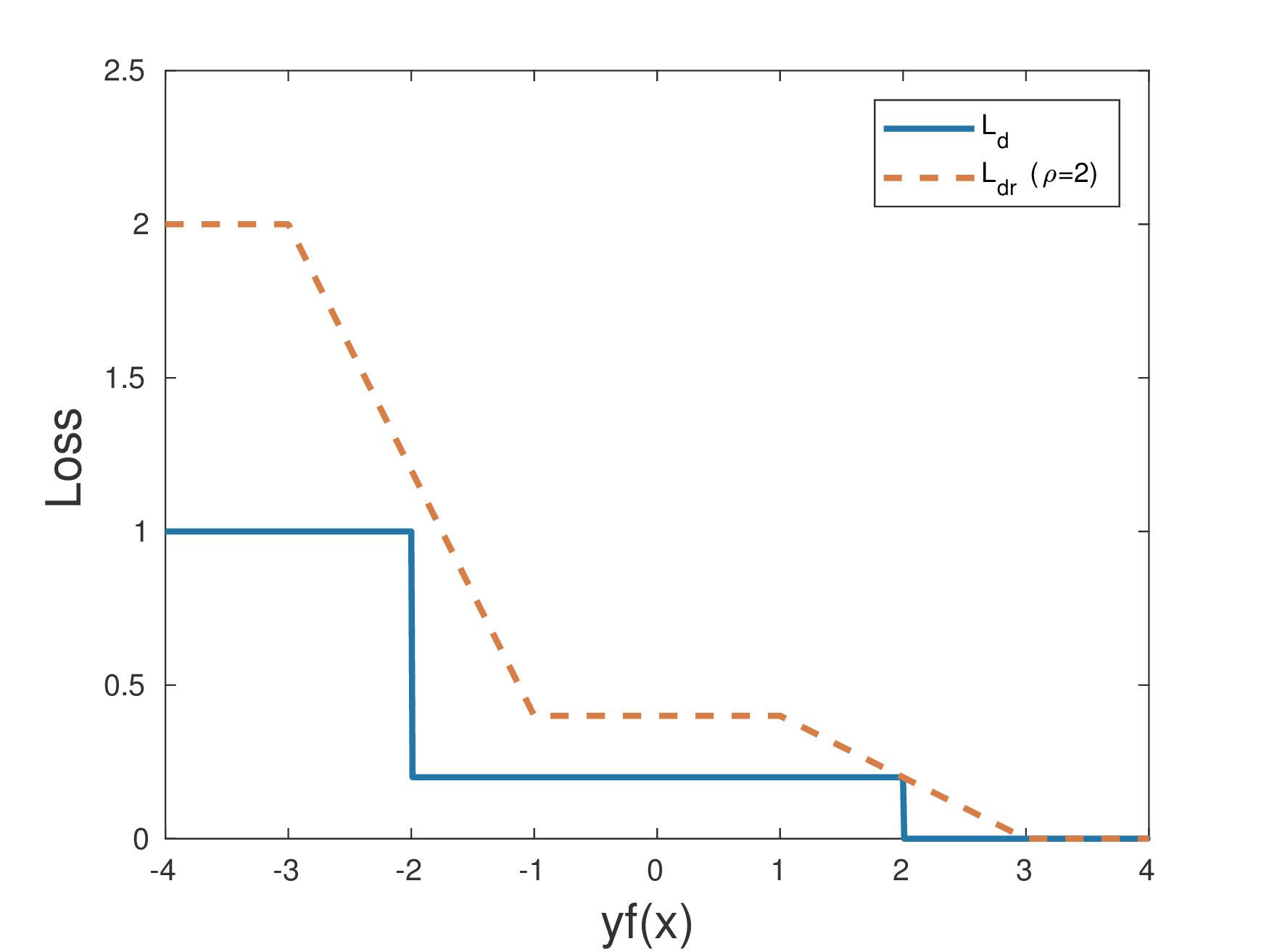}
    \caption{Double Ramp Loss with $\rho=2$}
    \label{fig:my_label}
\end{figure}
Here $[a]_{+} = \max(0, a)$ and $d$ is the cost of rejection. Figure~\ref{fig:my_label} shows the plot of double ramp loss for $\rho=2$.

We first consider developing active learning algorithm for linear classifiers (i.e. $f(\xx) = \ww \cdot \xx$). We use stochastic gradient descent (SGD) to derive double ramp loss based active learning algorithm. 
Parameters update equations using SGD are as follows. 
\begin{align*}
&\;\;\;\ww_{t+1} = \ww_t - \eta \nabla_{\ww_t} L_{dr}(y_tf(\xx_t), \rho_{t})\\
=&\begin{cases}
\ww_{t} + \eta dy_t\xx_t,           &  \rho_t - 1 \leq y_tf(\xx_t) \leq \rho_t + 1 \\
\ww_{t} + \eta (1-d)y_t\xx_t        &  -\rho_t - 1 \leq y_tf(\xx_t) \leq -\rho_t+1 \\
\ww_{t}                             & \text{otherwise}
\end{cases}\\
{\rho}_{t+1}& = \rho_t - \eta \nabla_{\rho_t} L_{dr}(y_tf(\xx_t), \rho_{t})\\
&=
\begin{cases}
{\rho}_{t} - \eta d,         &  \rho_t - 1 \leq y_tf(\xx_t) \leq \rho_t + 1 \\
{\rho}_{t} + \eta (1-d),    &  -\rho_t - 1 \leq y_tf(\xx_t) \leq -\rho_t + 1 \\
{\rho}_{t}                         & \text{otherwise}
\end{cases}
\end{align*}
Where $\eta$ is the step-size. We see that the parameters are updated only when $|f_t(\xx_t)| \in [\rho_t - 1, \rho_t + 1]$. For the rest of the regions, the gradient of the loss $L_{dr}$ is zero therefore, there won't be any update when an example $\xx_t$ is such that $|f_t(\xx_t)| \notin [\rho_t - 1, \rho_t + 1]$. Thus, there is no need to query the label when $|f_t(\xx_t)| \notin [\rho_t - 1, \rho_t + 1]$. We only query the labels when $|f_t(\xx_t)| \in [\rho_t - 1, \rho_t + 1]$. Thus, we ask the label of the current example only if it falls in the linear region of the loss $L_{dr}$. This is the same way any margin based active learning approach updates the parameters. If the algorithm does not query the label $y_t$, the parameters ($\ww, \rho$) are not updated. Thus, we define the query function $Q_t$ as follows.
\begin{equation}
Q_t=
\begin{cases}
1 & \text{if} \; \rho_t - 1 \leq |f(\xx_t)| \leq \rho_t + 1 \\
0 & \text{otherwise}
\end{cases}
\end{equation}
 The detailed algorithm is given in Algorithm~\ref{algo:double-ramp-active-learning}. We call it DRAL (double ramp loss based active learning). 
DRAL can be easily extended for learning nonlinear classifiers using kernel trick and is described in Appendix \ref{app-sec:kernelized-DRAL}. 
\begin{algorithm}
\caption{{\bf D}ouble {\bf R}amp Loss {\bf A}ctive {\bf L}earning (DRAL)}
\label{algo:double-ramp-active-learning}
\begin{algorithmic}
\State {\bf Input:} $d \in (0, 0.5)$, step size $\eta$
\State {\bf Output:} Weight vector $\ww$, Rejection width $\rho$
\State {\bf Initialize:} $\ww_1=\mathbf{0}, \rho_{1}=1$
\For{$t = 1,\ldots,T$} 
\State Sample $\xx_{t} \in S$ 
\State Set $f_{t}(\xx_{t})=\ww_{t} \cdot \xx_{t}$
\If{$\rho_{t} -1 \leq |f_{t}(\xx_{t})| \leq \rho_{t} + 1$}
\State Set $Q_t=1$
\State Query the label $y_{t}$ of $\xx_t$.
\If{($\rho_{t} - 1 \leq \; y_{t}f_{t}(\xx_t) \; \leq \rho_{t} + 1$)}
\State $\ww_{t+1} = \ww_{t} + \eta dy_{t}\xx_{t}$.
\State $\rho_{t+1} = \rho_{t} - \eta d$
\ElsIf{($-\rho_{t} - 1 \leq y_{t}f_{t}(\xx_{t}) \leq -\rho_{t} + 1$)}
\State $\ww_{t+1} = \ww_{t} + \eta (1-d)y_{t}\xx_{t}$
\State $\rho_{t+1} = \rho_{t} + \eta (1-d)$
\EndIf
\Else 
\State $\ww_{t+1} = \ww_{t}$ 
\State $\rho_{t+1} = \rho_{t}$
\EndIf
\EndFor
\end{algorithmic}
\end{algorithm}
 
\subsection{Mistake Bounds for DRAL}
In this section, we derive the mistake bounds of DRAL. Before presenting the mistake bounds, we begin by presenting a lemma which would facilitate the following mistake bound proofs. Let $f_t(\xx_t)=\ww_t \cdot \xx_t$. We define the following.\footnote{$\mathbb{I}_{\{A\}}$ takes value 1 when $A$ is true and 0 otherwise.}
\begin{align}
\label{define-ct-rt-mt}
\begin{cases}
C_{t} = \mathbb{I}_{ \{ {\rho}_{t} \leq y_{t}f_t(\xx_{t}) \leq {\rho}_{t} + 1 \} }&R_{1t} = \mathbb{I}_{ \{ {\rho}_{t} - 1 \leq y_{t}f_t(\xx_{t}) \leq {\rho}_{t} \} } \\
R_{2t} = \mathbb{I}_{ \{ -{\rho}_{t}  \leq y_{t}f_t(\xx_{t}) \leq -{\rho}_{t} + 1 \} }&M_{t} = \mathbb{I}_{ \{ -{\rho}_{t} - 1  \leq y_{t}f_t(\xx_{t}) \leq -{\rho}_{t} \} } 
\end{cases}
\end{align}
\begin{lemma}
\label{lemma-first-lemma-general-results}
Let $(\xx_{1}, y_{1}), \dotso , (\xx_{T}, y_{T})$ be a sequence of input instances, where $\xx_{t} \in \mathcal{X}$ and $y_{t} \in \{-1, 1\}$ for all $t\in[T]$.\footnote{Here, $[T]$ denotes the sequence $1,\ldots, T$.} Given $C_{t}, R_{1t}, R_{2t} \text{ and } M_{t}$ as defined in eq.(\ref{define-ct-rt-mt}) and $\alpha >0$, the following bound holds for any $\ww$ such that $\| \ww \| \leq \WW$.

\begin{equation*}
\begin{aligned}
 &{\alpha}^2{ \| \ww \| }^2 + (1 - {\alpha}\rho)^2 + \frac{ 2 \alpha \eta }{ m }  \sum_{t=1}^T  L_{dr}(y_tf(\xx_{t}), \rho) \geq \\
 &\sum\limits_{t=1}^{T} [\fc]\big{[}  2{\alpha} \eta d + 2 \eta (L_{dr}(y_tf_t(\xx_{t}), \rho_t) - d )  \\
 -& \eta^2 d^2( {\| \xx_{t} \|}^2 + 1 ) \big{]}+ \sum\limits_{t=1}^{T} [\sc]\big{[}   \frac{ 2{\alpha} \eta (1 + d) m_{21} }{ m_{22} }  \\
 +& 2 \eta (L_{dr}(y_tf_t(\xx_{t}), \rho_t) - d- 1) - \eta^2 (1-d)^2( {\| \xx_{t} \|}^2 + 1 )  \big{]} 
\end{aligned}
\end{equation*}
where $f(\xx_t)=\ww \cdot \xx_t$ and $f_t(\xx_t)=\ww_t \cdot \xx_t$.
\end{lemma}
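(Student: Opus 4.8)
The plan is to run a potential-function (Lyapunov) argument of the kind used in perceptron-style mistake bounds, tracking the progress of the iterates $(\ww_t,\rho_t)$ toward the scaled comparator $(\alpha\ww,\alpha\rho)$. Define
\[
\Phi_t = \|\ww_t - \alpha\ww\|^2 + (\rho_t - \alpha\rho)^2 .
\]
With the initialization $\ww_1=\mathbf 0$ and $\rho_1=1$ we get $\Phi_1 = \alpha^2\|\ww\|^2 + (1-\alpha\rho)^2$, which is exactly the first two terms on the left-hand side. The third term $\frac{2\alpha\eta}{m}\sum_t L_{dr}(y_tf(\xx_t),\rho)$ will turn out to be precisely the slack needed to absorb the comparator cross-terms, so the whole lemma amounts to telescoping $\Phi_t$ and bounding the residual.

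First I would compute the one-step decrease $\Phi_t-\Phi_{t+1}$ in each active region. When $C_t+R_{1t}=1$, i.e. $\rho_t-1\le y_tf_t(\xx_t)\le\rho_t+1$, the update is $\ww_{t+1}=\ww_t+\eta d y_t\xx_t$, $\rho_{t+1}=\rho_t-\eta d$, and expanding both squares gives
\[
\Phi_t-\Phi_{t+1} = -2\eta d\,[y_tf_t(\xx_t)-\rho_t] + 2\eta d\alpha\,[y_tf(\xx_t)-\rho] - \eta^2 d^2(\|\xx_t\|^2+1).
\]
The key simplification is that on this region (using $\rho_t\ge 1$) the double ramp loss collapses to the single affine piece $L_{dr}(y_tf_t(\xx_t),\rho_t)=d\,(1-y_tf_t(\xx_t)+\rho_t)$, so the first summand becomes $2\eta\,(L_{dr}(y_tf_t(\xx_t),\rho_t)-d)$. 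An entirely analogous computation on the region $R_{2t}+M_t=1$, where $L_{dr}(y_tf_t(\xx_t),\rho_t)=(d+1)-(1-d)(y_tf_t(\xx_t)+\rho_t)$, converts the algorithm-side term into $2\eta\,(L_{dr}(y_tf_t(\xx_t),\rho_t)-d-1)$. On the remaining rounds there is no update and $\Phi_t-\Phi_{t+1}=0$.

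Next I would telescope, using $\sum_{t=1}^{T}(\Phi_t-\Phi_{t+1})=\Phi_1-\Phi_{T+1}\le\Phi_1$ since $\Phi_{T+1}\ge 0$, and substitute the per-step expressions. This already reproduces the $[C_t+R_{1t}]$- and $[R_{2t}+M_t]$-weighted sums in the statement, except that each summand carries a leftover comparator cross-term: $2\alpha\eta d\,[y_tf(\xx_t)-\rho-1]$ on the first region, and $2\eta(1-d)\alpha\,[y_tf(\xx_t)+\rho]-\frac{2\alpha\eta(1+d)m_{21}}{m_{22}}$ on the second.

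The main obstacle is controlling these comparator cross-terms, and this is exactly where the loss term $\frac{2\alpha\eta}{m}\sum_t L_{dr}(y_tf(\xx_t),\rho)$ and the constants $m,m_{21},m_{22}$ enter. It suffices to establish, for every $t$, per-example lower bounds of the form $L_{dr}(y_tf(\xx_t),\rho)\ge m\,d\,(\rho+1-y_tf(\xx_t))$ on the first region and the corresponding inequality (involving $\tfrac{(1+d)m_{21}}{m_{22}}$) on the second. These are \emph{not} subgradient inequalities, since $L_{dr}$ is non-convex; instead they follow by comparing the explicit piecewise-linear graph of $L_{dr}(\cdot,\rho)$ against a single affine function. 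Because $\|\ww\|\le\WW$ and $\X$ is bounded, $y_tf(\xx_t)$ ranges over a bounded interval, and $m$ together with $m_{21}/m_{22}$ are precisely the constants that keep the affine minorant valid over that interval (the binding case being the most negative margin, where $L_{dr}$ has already saturated while the affine function keeps increasing). Combining these with $L_{dr}\ge 0$ on the non-update rounds yields $-\sum(\text{cross-terms})\le\frac{2\alpha\eta}{m}\sum_t L_{dr}(y_tf(\xx_t),\rho)$, which, rearranged with $\Phi_1\ge\sum_t(\Phi_t-\Phi_{t+1})$, gives the claimed inequality. I expect the real care to lie in the region-by-region verification of the $L_{dr}$ form (including confirming the $\rho_t\ge 1$ condition used to drop the second hinge pair) and in pinning down the constants $m,m_{21},m_{22}$.
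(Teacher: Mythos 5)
Your proposal is correct and follows essentially the same route as the paper: the same potential $\|\ww_t-\alpha\ww\|^2+(\rho_t-\alpha\rho)^2$ telescoped over rounds, the same identification of $L_{dr}$ with its affine piece on each update region to convert the algorithm-side terms, and the same affine-minorant inequalities for the comparator loss (these are precisely the paper's auxiliary Lemmas 8 and 9, proved by checking the piecewise-linear graph of $L_{dr}$ against the affine function at the breakpoints $yf(\xx)=\rho+1$, $-\rho+1$ and the endpoint $-\WW\RR$), with $m=\min(m_1,m_{22})$ and $L_{dr}\geq 0$ absorbing the non-update rounds. The details you flag as remaining (verifying the single-hinge form of the loss on the update regions and pinning down $m$, $m_{21}$, $m_{22}$) are exactly what the paper's two auxiliary lemmas supply.
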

The proof is given in Appendix \ref{app-sec:lemma-1}. Now, we will find the bounds on rejection rate and mis-classification rate. 

\begin{theorem} Let $(\xx_{1}, y_{1}), \dotso , (\xx_{T}, y_{T})$ be a sequence of input instances, where $\xx_{t} \in \mathcal{X}$ and $y_{t} \in \{-1, 1\}$ and $\| \xx_{t}\| \leq R$ for all $t\in[T]$. Assume that there exists a $f(\xx)=\ww \cdot \xx$ and $\rho$ such that $\| \ww \| \leq \WW$ and $L_{dr}(y_tf(\xx_t), \rho) = 0$ for all $t\in [T]$.  
\begin{enumerate}
\item Number of examples rejected 
by DRAL (Algorithm~\ref{algo:double-ramp-active-learning}) among those for which the label was asked in this sequence is upper bounded as follows. 
\begin{equation*}
\sum \limits_{t:Q_t=1} [R_{1t} + R_{2t}] \leq \alpha^2 \|  \ww \|^2 + (1 - \alpha \rho)^2
\end{equation*}
where  $\alpha = \max \Big{(} \frac{1 + \eta^2 d^2(R^2 + 1) + 2 \eta d}{2 \eta d} ,\frac{m_{22} (1+ \eta^2 (1-d)^2(R^2 + 1) + 2 \eta (1-d) )}{2 m_{21} \eta (1+d)} \Big{)}$.
 \item Number of examples mis-classified by DRAL (Algorithm~\ref{algo:double-ramp-active-learning}) among those for which the label was asked in this sequence is upper bounded as follows.
\begin{equation*}
\sum\limits_{t:Q_t=1}M_{t} \leq {\alpha}^2{ \| \ww \| }^2 + (1 - {\alpha}\rho)^2
\end{equation*}
where  $\alpha = \max \Big{(} \frac{ \eta d(R^2 + 1) + 2}{2} ,\frac{ m_{22} ( 1+ \eta^2 (1-d)^2(R^2 + 1) + 2 \eta (1-d) )}{2 m_{21} \eta (1+d)} \Big{)}$.
\end{enumerate}
\end{theorem}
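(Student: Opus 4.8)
The plan is to feed the hypotheses of the theorem into Lemma~\ref{lemma-first-lemma-general-results} and then choose $\alpha$ large enough that each active bracket on the right-hand side dominates the indicator we wish to count. First I would invoke the realizability hypothesis: since $L_{dr}(y_t f(\xx_t),\rho)=0$ for every $t$, the loss sum $\frac{2\alpha\eta}{m}\sum_t L_{dr}(y_t f(\xx_t),\rho)$ on the left of the lemma vanishes, leaving the clean quantity $\alpha^2\|\ww\|^2 + (1-\alpha\rho)^2$. This is exactly the right-hand side claimed in both bounds, so everything reduces to showing that the two sums on the right of the lemma dominate the relevant count, noting that $C_t,R_{1t}$ (resp. $R_{2t},M_t$) are mutually exclusive at each trial, so at most one bracket is active.

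Next I would make the two bracketed coefficients explicit, writing $A_t$ for the bracket multiplying $C_t + R_{1t}$ and $B_t$ for the bracket multiplying $R_{2t} + M_t$. On the update region $C_t + R_{1t}$ (i.e. $\rho_t - 1 \le y_t f_t(\xx_t) \le \rho_t + 1$) a direct evaluation of the piecewise-linear $L_{dr}$ gives $L_{dr}(y_t f_t(\xx_t),\rho_t) = d(1 - y_t f_t(\xx_t) + \rho_t)$, so that $L_{dr} - d = d(\rho_t - y_t f_t(\xx_t))$; similarly on $R_{2t} + M_t$ one gets $L_{dr} - d - 1 = -(1-d)(y_t f_t(\xx_t) + \rho_t)$. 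Substituting these, using $\|\xx_t\| \le R$, and minimizing over each region (the worst point of the first bracket is the far end $y_t f_t = \rho_t + 1$ of $C_t$, and of the second bracket the near end $y_t f_t = -\rho_t + 1$ of $R_{2t}$) produces the scalar lower bounds $A_t \ge 2\alpha\eta d - 2\eta d - \eta^2 d^2(R^2+1)$ and $B_t \ge \frac{2\alpha\eta(1+d)m_{21}}{m_{22}} - 2\eta(1-d) - \eta^2(1-d)^2(R^2+1)$.

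For Part 1 I would then pick $\alpha$ so that both lower bounds are at least $1$; solving the two resulting inequalities for $\alpha$ yields precisely the two arguments of the stated $\max$. With each active bracket at least $1$, the lemma gives $\alpha^2\|\ww\|^2 + (1-\alpha\rho)^2 \ge \sum_t (C_t + R_{1t}) + \sum_t (R_{2t} + M_t)$, and since $R_{1t} + R_{2t} \le C_t + R_{1t} + R_{2t} + M_t$ while all four indicators vanish unless $Q_t = 1$, the rejection bound follows. For Part 2 the only change is that I would relax the requirement on the first bracket from $A_t \ge 1$ to $A_t \ge 0$ (misclassifications occur only in the second region, so the first sum only needs to stay non-negative) while keeping $B_t \ge 1$; the relaxed inequality $A_t \ge 0$ produces the smaller first argument $\frac{\eta d(R^2+1)+2}{2}$ of the $\max$, and the lemma then gives $\alpha^2\|\ww\|^2+(1-\alpha\rho)^2 \ge \sum_t (R_{2t}+M_t) \ge \sum_t M_t$.

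The main obstacle I anticipate is the explicit evaluation of $L_{dr}$ on the two active regions together with the attendant bookkeeping: $L_{dr}$ is a difference of four hinge terms, so one must verify which ramp segments are active in each region and confirm that the ramp regions $[\rho_t - 1,\rho_t+1]$ and $[-\rho_t-1,-\rho_t+1]$ do not interfere (which needs $\rho_t$ to stay above $1$, or a separate argument if it drops below). Getting the signs right in $L_{dr}-d$ and $L_{dr}-d-1$, and correctly identifying which endpoint of each region is the binding worst case, is exactly what generates the constants $2\eta d$ and $2\eta(1-d)$ in the numerators of $\alpha$; the remainder is the routine Perceptron-style telescoping already packaged inside Lemma~\ref{lemma-first-lemma-general-results}.
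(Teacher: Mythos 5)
Your proposal matches the paper's proof essentially step for step: plug the realizability assumption into Lemma~\ref{lemma-first-lemma-general-results} to eliminate the loss sum, evaluate $L_{dr}$ explicitly on the two update regions so the brackets are bounded below at their worst endpoints ($y_tf_t(\xx_t)=\rho_t+1$ and $y_tf_t(\xx_t)=-\rho_t+1$, i.e.\ $L_{dr}\geq 0$ and $L_{dr}\geq 2d$ respectively), and choose $\alpha$ so that the first bracket is $\geq 1$ (Part 1) or merely $\geq 0$ (Part 2) while the second is $\geq 1$, which yields exactly the stated maxima for $\alpha$. The only differences are cosmetic (the paper substitutes the inequalities $L_{dr}\geq 0$ and $L_{dr}\geq 2d$ rather than naming worst-case endpoints), and the interference issue you flag when $\rho_t<1$ is a subtlety the paper's own proof also leaves unaddressed, so your argument is the paper's argument.
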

 The proof is given in Appendix \ref{app-sec-theorem-2}. The above theorem assumes that there exists $f(\xx)=\ww \cdot \xx$ and $\rho$ such that $L_{dr}(y_tf(\xx_t), {\rho})=0$ for all $t \in [T]$. This means that the data is linearly separable. In such a case, the number of mistakes made by the algorithm on unrejected examples as well as the number of rejected examples are upper bounded by a complexity term and are independent of $T$. Now, we derive the bounds when the assumption $L_{dr}(y_tf(\xx_t), {\rho})=0,\;t\in[T]$ does not hold for any $f(\xx)=\ww \cdot \xx$ and $\rho$.

\begin{theorem}
Let $(\xx_{1}, y_{1}), (\xx_{2}, y_{2}), \dotso , (\xx_{T}, y_{T})$ be a sequence of input instances, where $\xx_{t} \in \mathcal{X}$ and $y_{t} \in \{-1, 1\}$ and $\| \xx_{t}\| \leq R$ for all $t\in [T]$. Then, for any given $f(\xx)=\ww \cdot \xx$ ($\| \ww \| \leq \WW$) and $\rho$, we observe the following. 
\begin{enumerate}
    \item 
Number of rejected examples by DRAL (Algorithm~\ref{algo:double-ramp-active-learning}) among those for which the label was asked in this sequence is upper bounded as follows.  
\begin{equation*}
\sum \limits_{t:Q_t=1} [R_{1t} + R_{2t}] \leq \alpha^2 \|  \ww \|^2 + (1 - \alpha \rho)^2 +  \sum_{t=1}^T \frac{ 2\eta \alpha }{ m }  L_{dr}(y_tf(\xx_{t}), {\rho})
\end{equation*}
where  $\alpha = \max \begin{cases} \frac{1 + \eta^2 d^2(R^2 + 1) + 2 \eta d}{2 \eta d} \\ \frac{m_{22}(1+ \eta^2 (1-d)^2(R^2 + 1) + 2 \eta (1-d))}{2 m_{21} \eta (1+d)} \end{cases}$.
    \item 
The number of misclassified examples by DRAL (Algorithm~\ref{algo:double-ramp-active-learning}) is upper bounded as follows. 
\begin{align*}
    \sum \limits_{t:Q_t=1} M_{t} \leq \alpha^2 \| \ww \|^2 + (1 - \alpha \rho)^2 +  \sum_{t=1}^T \frac{ 2\eta \alpha }{ m } L_{dr}(y_tf(\xx_{t}), {\rho})
\end{align*}
where $\alpha = \max \begin{cases} \frac{ \eta d(R^2 + 1) + 2}{2} \\ \frac{m_{22}(1+ \eta^2 (1-d)^2(R^2 + 1) + 2 \eta (1-d))}{2 m_{21} \eta (1+d)} \end{cases}$.
\end{enumerate}
\end{theorem}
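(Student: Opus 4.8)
The plan is to read off both bounds directly from Lemma~\ref{lemma-first-lemma-general-results}. Denote by
\[
B_{1}(t)=2\alpha\eta d+2\eta\big(L_{dr}(y_tf_t(\xx_t),\rho_t)-d\big)-\eta^2 d^2(\|\xx_t\|^2+1)
\]
and
\[
B_{2}(t)=\frac{2\alpha\eta(1+d)m_{21}}{m_{22}}+2\eta\big(L_{dr}(y_tf_t(\xx_t),\rho_t)-d-1\big)-\eta^2(1-d)^2(\|\xx_t\|^2+1)
\]
the two bracketed multipliers of $C_t+R_{1t}$ and of $R_{2t}+M_t$ on the right-hand side of the lemma. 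Since the left-hand side of the lemma is exactly $\alpha^2\|\ww\|^2+(1-\alpha\rho)^2+\frac{2\alpha\eta}{m}\sum_t L_{dr}(y_tf(\xx_t),\rho)$, i.e. the quantity on the right of each claimed bound, it suffices to show that $\sum_t (C_t+R_{1t})B_1(t)+\sum_t(R_{2t}+M_t)B_2(t)$ dominates $\sum_t(R_{1t}+R_{2t})$ for Part~1 and $\sum_t M_t$ for Part~2. Each indicator vanishes when $Q_t=0$, so these sums coincide with the sums over $\{t:Q_t=1\}$, and I may work with the full sum over $t\in[T]$.

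The one genuine computation I would carry out is the region-wise lower bound on $L_{dr}(y_tf_t(\xx_t),\rho_t)$. Writing $z=y_tf_t(\xx_t)$: on $C_t\cup R_{1t}$ (where $\rho_t-1\le z\le\rho_t+1$) the loss satisfies $L_{dr}\ge 0$, so $L_{dr}-d\ge -d$; on $R_{2t}\cup M_t$ (where $-\rho_t-1\le z\le -\rho_t+1$) unfolding the hinge terms of $L_{dr}$ gives $L_{dr}=2d+(1-d)(1-z-\rho_t)$, whence $L_{dr}\ge 2d$ and $L_{dr}-d-1\ge d-1$. Feeding these region minima together with $\|\xx_t\|\le R$ into $B_1,B_2$ yields the uniform estimates $B_1(t)\ge 2\alpha\eta d-2\eta d-\eta^2 d^2(R^2+1)$ on $C_t\cup R_{1t}$ and $B_2(t)\ge \frac{2\alpha\eta(1+d)m_{21}}{m_{22}}-2\eta(1-d)-\eta^2(1-d)^2(R^2+1)$ on $R_{2t}\cup M_t$.

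For Part~1 I would choose $\alpha$ large enough that both right-hand sides above are $\ge 1$. Imposing $B_1\ge1$ gives exactly $\alpha\ge\frac{1+\eta^2 d^2(R^2+1)+2\eta d}{2\eta d}$ and imposing $B_2\ge1$ gives $\alpha\ge\frac{m_{22}(1+\eta^2(1-d)^2(R^2+1)+2\eta(1-d))}{2m_{21}\eta(1+d)}$, whose maximum is precisely the stated $\alpha$. With $B_1,B_2\ge1$ and $C_t,M_t\ge0$ one gets $(C_t+R_{1t})B_1(t)\ge R_{1t}$ and $(R_{2t}+M_t)B_2(t)\ge R_{2t}$ termwise, so the right-hand side of the lemma is at least $\sum_t(R_{1t}+R_{2t})$, proving the rejection bound. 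For Part~2 the target is only $\sum_t M_t$, so it is enough to retain the $M_t$ contribution while merely discarding the first group; hence I only need $B_1(t)\ge0$ and $B_2(t)\ge1$. The relaxed requirement $B_1\ge0$ lowers the first threshold to $\alpha\ge\frac{\eta d(R^2+1)+2}{2}$ — which is exactly why Part~2 carries a smaller first entry in its maximum — while $B_2\ge1$ reproduces the same second entry; then $(C_t+R_{1t})B_1(t)\ge0$ and $(R_{2t}+M_t)B_2(t)\ge M_t$ give the misclassification bound.

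The step I expect to be the main obstacle is the clean evaluation $L_{dr}=2d+(1-d)(1-z-\rho_t)$ on $R_{2t}\cup M_t$, and in particular the claim $L_{dr}\ge 2d$ there: this relies on the first ramp term being saturated at its maximal value $2d$, which holds provided $\rho_t\ge 1$ at every queried trial. I would therefore need to confirm from the update rule (together with $\rho_1=1$) that the rejection width stays in the regime where this saturation holds, or else verify that the constants $m,m_{21},m_{22}$ supplied by Lemma~\ref{lemma-first-lemma-general-results} already absorb the unsaturated case; everything after that is the routine arithmetic of matching the two threshold inequalities to the displayed values of $\alpha$. Conceptually the argument is identical to the preceding (separable) theorem — the only change is that the loss term $\frac{2\alpha\eta}{m}\sum_t L_{dr}(y_tf(\xx_t),\rho)$ is no longer zero and is simply carried through to the final bound.
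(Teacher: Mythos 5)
Your proposal is correct and follows essentially the same route as the paper's proof: both read the result off Lemma~\ref{lemma-first-lemma-general-results}, choose $\alpha$ so that the coefficient of $C_t+R_{1t}$ is at least $1$ (relaxed to at least $0$ in Part~2, which is exactly what produces the smaller first threshold $\frac{\eta d(R^2+1)+2}{2}$) and the coefficient of $R_{2t}+M_t$ is at least $1$, and then dominate $\sum_t (R_{1t}+R_{2t})$ and $\sum_t M_t$ by the weighted sums. The saturation concern you flag (that $L_{dr}\geq 2d$ on $R_{2t}\cup M_t$ requires $\rho_t\geq 1$) applies equally to the paper, which asserts this inequality without verifying that the iterates keep $\rho_t\geq 1$, so it is a shared caveat rather than a gap specific to your argument.
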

 The proof is given in Appenxid \ref{app-sec:theorem-3}. We see that when the data is not linearly separable, the number of mistakes made by the algorithm is upper bounded by the sum of complexity term and sum of the losses using a fixed classifier.

\section{Active Learning Using Double Sigmoid Loss Function}
\label{section:ds_active} 
We observe that double ramp loss is not smooth. Moreover, $L_{dr}$ is constant whenever $yf(\xx) \in [\rho+1,\infty)\cup (-\infty, -\rho-1] \cup [-\rho+1 , \rho-1]$. Thus, when loss $L_{dr}$ for an example $\xx$ falls in any of these three regions, the gradient of the loss becomes zero. The zero gradient causes no update. Thus, there is no benefit of asking the labels when an example falls in one of these regions. However, we don't want to ignore these regions completely. To capture the information in these regions, we need to change the loss function in such a way that the gradient does not vanish completely in these regions. To ensure that, we propose a new loss function.

\subsection{Double Sigmoid Loss}
We propose a new loss function for reject option classification by combining two sigmoids as follows. We call it {\em double sigmoid loss} function $L_{ds}$.
\begin{align*}
    L_{ds}(yf(\xx), \rho) = 2d \sigma (yf(\xx) - \rho) + 2(1-d) \sigma (y f(\xx) + \rho)
\end{align*}
where $\sigma(a) = \left(1+e^{ \gamma a}\right)^{-1}$ is the sigmoid function ($\gamma >0$).
Figure~\ref{fig:DSL} shows the double sigmoid loss function. $L_{ds}$ is a smooth non-convex surrogate of loss $L_d$ (see eq.(\ref{eq:0-d-1})). We also see that for the double sigmoid loss, the gradient in the regions $yf(\xx) \in [\rho+1,\infty)\cup (-\infty, -\rho-1] \cup [-\rho+1 , \rho-1]$ does not vanish unlike double ranp loss. 
\begin{figure}[h]
\begin{center}
    \includegraphics[width=0.7\columnwidth]{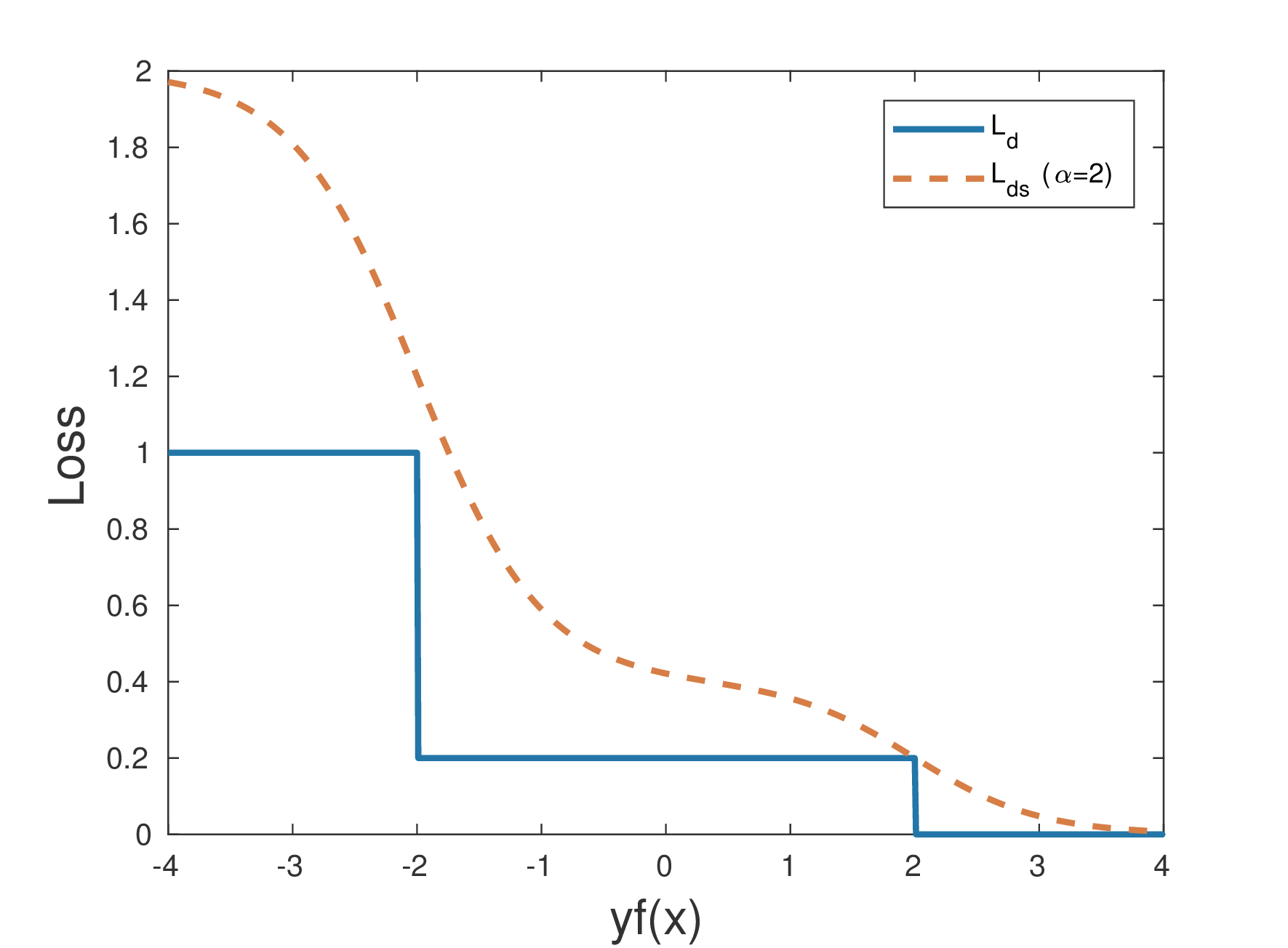}
    \caption{Double sigmoid loss with $\gamma = 2$.}
    \label{fig:DSL}
    \end{center}
\end{figure}
Below we establish that the loss $L_{ds}$ is $\beta$-smooth.\footnote{A function $f$ is $\beta$-smooth if for all $x, y \in$ Domain($f$),
\begin{equation*}
    \| \nabla f(x) - \nabla f(y)  \| \leq \beta \| x - y  \|.
\end{equation*}
} 
\begin{lemma}
\label{lemma:smoothness}
Assuming $\| \xx \| \leq \RR$,  Double sigmoid loss $L_{ds}(yf(\xx), \rho)$ is $\beta-$smooth with constant $\beta =  \frac{\gamma^2}{5} \big[ \RR^2 + 1 \big] $. 
\end{lemma}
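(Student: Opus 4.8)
The plan is to use the standard second-order characterization of smoothness: since $L_{ds}$ is twice continuously differentiable in the parameters $(\ww,\rho)$ on the (convex) parameter space, its gradient map is $\beta$-Lipschitz if and only if the Hessian $H:=\nabla^2_{(\ww,\rho)}L_{ds}$ has operator norm at most $\beta$ everywhere. I would therefore reduce the lemma to the uniform bound $\|H\|_{\mathrm{op}}\le \frac{\gamma^2}{5}(\RR^2+1)$, and, because $H$ is symmetric so that $\|H\|_{\mathrm{op}}=\max_{\|v\|=1}|v^\top H v|$, it suffices to bound the quadratic form $v^\top H v$ over unit vectors $v=(u,s)$, where $u$ lives in the weight space and $s\in\R$ with $\|u\|^2+s^2=1$.

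First I would record the sigmoid derivatives: $\sigma'(a)=-\gamma\,\sigma(a)(1-\sigma(a))$ and $\sigma''(a)=\gamma^2\,\sigma(a)(1-\sigma(a))(1-2\sigma(a))$. A one-variable optimization of $\tau\mapsto \tau(1-\tau)(1-2\tau)$ on $[0,1]$, whose critical points are $\tau=\tfrac12\pm\tfrac{\sqrt3}{6}$, gives the sharp bound $|\sigma''(a)|\le \frac{\gamma^2}{6\sqrt3}$ for every $a$. Then, writing $a_1=yf(\xx)-\rho$ and $a_2=yf(\xx)+\rho$ with $f(\xx)=\ww\cdot\xx$, I would differentiate twice and collect the Hessian blocks, using $y^2=1$ to collapse the weight block: $\nabla^2_{\ww\ww}L_{ds}=A\,\xx\xx^\top$, $\nabla^2_{\ww\rho}L_{ds}=B\,y\xx$, and $\partial^2_{\rho\rho}L_{ds}=A$, where $A=2d\sigma''(a_1)+2(1-d)\sigma''(a_2)$ and $B=-2d\sigma''(a_1)+2(1-d)\sigma''(a_2)$.

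The key simplification, which I expect to make the estimate clean, is that substituting $v=(u,s)$ and abbreviating $p:=\xx\cdot u$ gives $v^\top H v=A p^2+2Bysp+As^2$, and completing the square (again using $y^2=1$) reassembles the cross terms into
\[
v^\top H v = 2d\,\sigma''(a_1)\,(p-ys)^2 + 2(1-d)\,\sigma''(a_2)\,(p+ys)^2 .
\]
I would then bound each squared factor by reading $p\mp ys=(\xx,\mp y)\cdot(u,s)$ as an inner product, so Cauchy-Schwarz with $\|(u,s)\|=1$ yields $(p\mp ys)^2\le \|\xx\|^2+1\le \RR^2+1$; this is precisely where the ``$+1$'' in the constant enters, rather than a bare $\RR^2$. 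Combining this with $|\sigma''|\le\frac{\gamma^2}{6\sqrt3}$ and $2d+2(1-d)=2$ gives $|v^\top H v|\le \frac{\gamma^2}{6\sqrt3}\cdot 2(\RR^2+1)=\frac{\gamma^2}{3\sqrt3}(\RR^2+1)$, and since $\frac{1}{3\sqrt3}=0.1924\ldots\le\frac15$, the stated constant $\beta=\frac{\gamma^2}{5}(\RR^2+1)$ follows. The only genuinely delicate steps are the sigmoid-cube maximization (needed for a tight enough constant) and spotting the completed-square form; the differentiation and the final numerical comparison are routine.
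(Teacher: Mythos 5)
Your proof is correct, and it differs from the paper's in the one step that actually requires an estimate. Both you and the paper reduce the lemma to bounding the norm of the Hessian $\nabla^2 L_{ds}$ in $(\ww,\rho)$, and both ultimately rest on the same cubic: the paper's Hessian entries contain $\sigma(1-\sigma)-2\sigma^2(1-\sigma)$, which is exactly your $\sigma''/\gamma^2=\sigma(1-\sigma)(1-2\sigma)$. From there the routes diverge. The paper bounds the spectral norm by the Frobenius norm, bounds each block entrywise using $|\sigma(1-\sigma)(1-2\sigma)|\le 0.1$, and recovers $(\RR^2+1)$ from the factorization $\RR^4+2\RR^2+1=(\RR^2+1)^2$; this yields exactly $\beta=\frac{\gamma^2}{5}(\RR^2+1)$. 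You instead bound the operator norm directly: writing $v^\top H v = 2d\,\sigma''(a_1)(p-ys)^2+2(1-d)\,\sigma''(a_2)(p+ys)^2$ via completing the square (with $y^2=1$), then applying Cauchy--Schwarz to $(p\mp ys)=(\xx,\mp y)\cdot(u,s)$ to get the factor $\RR^2+1$, and using the sharp bound $|\sigma''|\le\frac{\gamma^2}{6\sqrt 3}$. Your route buys a strictly tighter constant, $\frac{\gamma^2}{3\sqrt 3}(\RR^2+1)\approx 0.192\,\gamma^2(\RR^2+1)$, of which the stated $\frac{\gamma^2}{5}(\RR^2+1)$ is a weakening; it also avoids the lossy spectral-to-Frobenius step, at the cost of having to notice the completed-square structure. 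The paper's route is blunter but requires only entrywise bookkeeping. Both are valid proofs of the stated lemma.
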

The proof is given in Appendix \ref{app-sec:lemma-4}.

\subsection{Query Probability Function}
In the case of DRAL, we saw that the gradient of $L_{dr}$ becomes nonzero only in the region $yf(\xx) \in [\rho-1,\rho +1]$. So, we ask the labels only when examples fall in this region. However, in case of double sigmoid loss, the gradient does not vanish. Thus, to perform active learning using $L_{ds}$, we need to ask the labels selectively. 

We propose a query probability function to set the label query probability at trial $t$. The query probability function should carry the following properties. In the loss $L_d$ (see eq.(\ref{eq:0-d-1})), we see two transitions. One at $yf(\xx)=\rho$ (transition between correct classification and rejection) and another at $yf(\xx)=-\rho$ (transition between rejection and misclassification). Any example falling closer to one of these transitions captures more information about the two transitions. We want the query probability function to be such that it gives higher probabilities near these transitions. 
Examples that are correctly classified with a good margin, examples misclassified with a considerable margin, and examples in the middle of the reject region do not carry much information. Such examples are also situated away from the transition regions. Thus, query probability should decrease as we move away from these decision boundaries. Therefore, we ask the label in these regions with less probability.
Considering these desirable properties, we propose the following query probability function.  
\begin{align}
\label{eq:proper-bimodal-probability}
     p_{t} = 4 \; \sigma ( | f_{t}( \xx_{t} ) | - \rho_{t} )\left( 1 - \sigma( | f_{t} (\xx_{t}) |  - \rho_{t}) \right)
\end{align}
where $f_{t}(\xx_t) = \ww_{t} \cdot \xx_t$. Figure~\ref{fig:Query-probability} shows the graph of the query probability function. 
We see that the probability function has two peaks. One peak is at $yf(\xx)=\rho$ (transition between correct classification and rejection) and another at $yf(\xx)=-\rho$ (transition between rejection and misclassification).  
\begin{figure}[h]
\begin{center}
    \includegraphics[width=0.65\columnwidth]{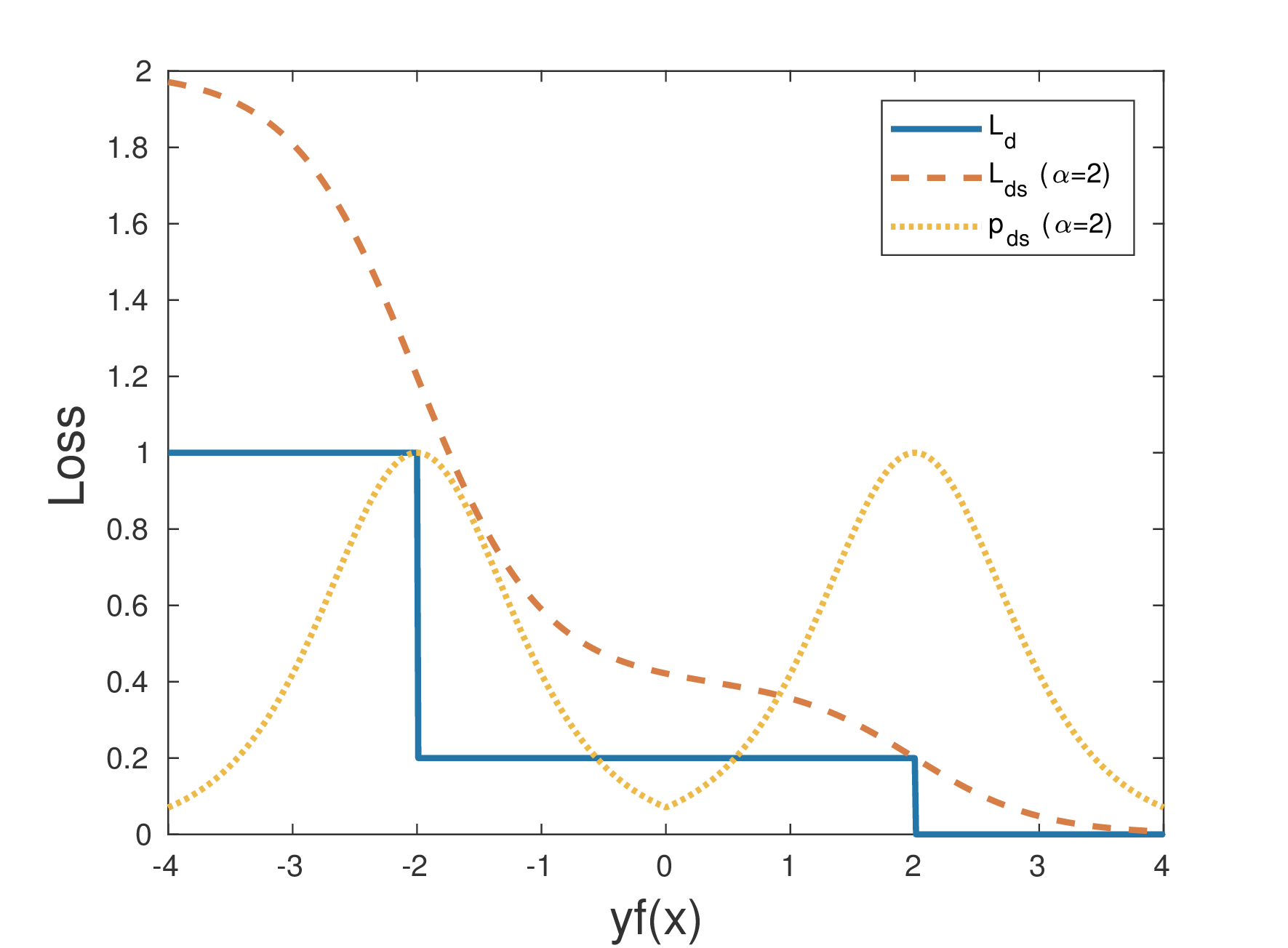}
    \caption{Query Probability Function}
    \label{fig:Query-probability}
    \end{center}
\end{figure}

\subsection{Double Sigmoid Based Parameter Updates}
The parameter update equations using $L_{ds}$ is as follows.
\begin{align}
    \nonumber & \ww_{t+1}=\ww_t -\eta \nabla_{\ww_t} L_{ds}(y_tf(\xx_t), \rho_t)\\
    \nonumber &= \ww_t-2y_{t} \alpha \xx_{t} \Big{[}d\sigma ( y_{t} f_{t}( \xx_{t} ) - \rho_{t} )\left( 1 - \sigma( y_{t} f_{t}( \xx_{t} )  - \rho_{t}) \right)\\
    &+ (1 - d)\sigma ( y_{t} f_{t}( \xx_{t} ) + \rho_{t} )\left( 1 - \sigma( y_{t} f_{t}( \xx_{t} )  + \rho_{t}) \right)  \Big{]} \label{eq:proper-bimodal-w-update}
    \\
    \nonumber & \rho_{t+1}=\rho_t - \eta \nabla_{\rho_t} L_{ds}(y_tf(\xx_t), \rho_t)\\
    \nonumber &= \rho_t+ 2\alpha \Big{[}d  \sigma ( y_{t} f_{t}( \xx_{t} ) - \rho_{t} )\left( 1 - \sigma( y_{t} f_{t}( \xx_{t} )  - \rho_{t}) \right)\\
    &\;\;\;\;\;- (1 - d)  \sigma ( y_{t} f_{t}( \xx_{t} ) + \rho_{t} )\left( 1 - \sigma( y_{t} f_{t}( \xx_{t} ) + \rho_{t}) \right)  \Big{]} \label{eq:proper-bimodal-rho-update}
\end{align}

\noindent Now, we will explain the update equations for $\ww$ and $\rho$. 
\begin{enumerate}
    \item When an example is correctly classified with good margin (i.e. $y_{t}f_{t}(\xx_{t}) >> 0$) then the active learning algorithm will update $\ww$ by a small factor of $y_{t} \xx_{t}$ and will reduce the rejection width $(\rho)$ because for $y_{t}f_{t}(\xx_{t}) >> 0$, $ d \sigma ( y_{t} f_{t}( \xx_{t} ) - \rho_{t} )\left( 1 - \sigma( y_{t} f_{t}( \xx_{t} )  - \rho_{t}) \right) > (1-d) \sigma ( y_{t} f_{t}( \xx_{t} ) + \rho_{t} )\left( 1 - \sigma( y_{t} f_{t}( \xx_{t} )  + \rho_{t}) \right)$.
    \item When an example is misclassified with good margin (i.e. $y_{t}f_{t}(\xx_{t}) << 0$) then the active learning algorithm will update $\ww$ by a large factor of $y_{t} \xx_{t}$ and will increase the rejection width $(\rho)$ because for $y_{t}f_{t}(\xx_{t}) << 0$, $d \sigma ( y_{t} f_{t}( \xx_{t} ) - \rho_{t} )\left( 1 - \sigma( y_{t} f_{t}( \xx_{t} )  - \rho_{t}) \right) < (1-d) \sigma ( y_{t} f_{t}( \xx_{t} ) + \rho_{t} )\left( 1 - \sigma( y_{t} f_{t}( \xx_{t} )  + \rho_{t}) \right)$.
    \end{enumerate}
    We use the acronym DSAL for double sigmoid based active learning. DSAL is described in Algorithm~\ref{algo:double-sigmoid-active-learning}.

\begin{algorithm}
\caption{{\bf D}ouble {\bf S}igmoid Loss {\bf A}ctive {\bf L}earning (DSAL)}
\label{algo:double-sigmoid-active-learning}
\begin{algorithmic}
\State {\bf Input:} $d \in (0, 0.5)$, step size $\eta$
\State {\bf Output:} Weight vector $\ww$, Rejection width $\rho$.
\State {\bf Initialize:} $\ww_{1}, \rho_{1}$
\For{$t = 1,..,T$} 
\State Sample $\xx_{t} \in \mathbb{R}^{d}$
\State Set $f_{t}(\xx_t)=\ww_{t} \cdot \xx_t$ 
\State Set $p_{t}=4 \sigma ( | f_{t}( \xx_{t} ) | - \rho_{t} )\left( 1 - \sigma( | f_{t} (\xx_{t}) |  - \rho_{t}) \right)$ 
\State Randomly sample $z_{t} \in \{0, 1\}$ from Bernoulli($p_{t}$). 
\If{$z_{t} == 1$}
\State Query the label $y_{t}$ of $\xx_t$.
\State Find $\ww_{t+1}$ using eq.(\ref{eq:proper-bimodal-w-update}).
\State Find $\rho_{t+1}$ using eq.(\ref{eq:proper-bimodal-rho-update}).
\Else{}
\State $\ww_{t+1} = \ww_{t}$.
\State $\rho_{t+1} = \rho_{t}$.
\EndIf
\EndFor
\end{algorithmic}
\end{algorithm}

\subsection{Convergence of DSAL}
 In the case of DRAL, the mistake bound analysis was possible as $L_{dr}$ increases linearly in the regions where its gradient is nonzero. However, we don't see similar behavior in double sigmoid loss $L_{ds}$. Thus, we are not able to carry out the same analysis here. Instead, we here show the convergence of DSAL to local minima. For which, we borrow the techniques from online non-convex optimization. In online non-convex optimization, it is challenging to converge towards a global minimizer. It is a common practice to state the convergence guarantee of an online non-convex optimization algorithm by showing it's convergence towards an $\epsilon$-approximate stationary point. In our case, it means that for some $t$, $\| \nabla L_{ds} (y_{t} f_{t} (\xx_{t}), \rho_t) \|^2 \leq \epsilon$. To prove the convergence of DSAL, we use the notion of local regret defined in \citep{DBLP:journals/corr/abs-1708-00075} . 
\begin{definition}
The local regret for an online algorithm is
\begin{equation*}
\mathcal{R} ( T ) = \sum_{t=1}^T \| \nabla L_{ds} (y_{t}f_{t}(\xx_{t}) , \rho_{t}) \|^2. 
\end{equation*}
where $T$ is the total number of trials. (Defined in \citep{DBLP:journals/corr/abs-1708-00075})
\end{definition}
Thus, in each trial, we incur a regret, which is the squared norm of the gradient of the loss. When we reach a stationary point, the gradient will vanish and hence the norm. Note that the convergence here requires that the objective function should be $\beta$-smooth. In this case, $L_{ds}$ holds that property, as shown in Lemma~\ref{lemma:smoothness}. Thus, we can use the convergence approach proposed in \citep{DBLP:journals/corr/abs-1708-00075}.\footnote{$L_{dr}$ does not have sufficient smoothness properties required in \citep{DBLP:journals/corr/abs-1708-00075}. Thus, we do not present these convergence results for DRAL.}

\begin{theorem}
\label{theorem:local-regret-bound}
If we choose $\eta = \frac{5}{ \gamma^2 \big[ \RR^2 + 1 \big] }$, then using smoothness condition of $L_{ds}(yf(\xx), \rho)$, the local regret of DSAL algorithm is bounded as follows. 
\begin{equation*}
    \mathcal{R}(T) \leq \frac{4 \gamma^2}{5}  \left( \RR^2 + 1 \right) \left( T + 1 \right)
\end{equation*}
\end{theorem}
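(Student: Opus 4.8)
The plan is to treat DSAL as online gradient descent on the time-varying objective $L_t(\ww,\rho) := L_{ds}(y_t\,\ww\cdot\xx_t,\rho)$, whose gradient at the current iterate $(\ww_t,\rho_t)$ is exactly the quantity appearing in the local regret. On any trial where a label is queried, the updates in eq.(\ref{eq:proper-bimodal-w-update}) and eq.(\ref{eq:proper-bimodal-rho-update}) form a single gradient step $(\ww_{t+1},\rho_{t+1}) = (\ww_t,\rho_t) - \eta\,\nabla L_{ds}(y_tf_t(\xx_t),\rho_t)$. First I would invoke the $\beta$-smoothness from Lemma~\ref{lemma:smoothness}, which gives the descent inequality
\begin{equation*}
L_t(\ww_{t+1},\rho_{t+1}) \leq L_t(\ww_t,\rho_t) + \langle \nabla L_{ds}(y_tf_t(\xx_t),\rho_t),\, (\ww_{t+1},\rho_{t+1})-(\ww_t,\rho_t)\rangle + \tfrac{\beta}{2}\,\|(\ww_{t+1},\rho_{t+1})-(\ww_t,\rho_t)\|^2.
\end{equation*}
Substituting the gradient step and the prescribed $\eta = \tfrac{5}{\gamma^2(\RR^2+1)} = \tfrac{1}{\beta}$ makes the linear and quadratic terms combine to $-\tfrac{1}{2\beta}\|\nabla L_{ds}(y_tf_t(\xx_t),\rho_t)\|^2$, so that
\begin{equation*}
\tfrac{1}{2\beta}\,\|\nabla L_{ds}(y_tf_t(\xx_t),\rho_t)\|^2 \;\leq\; L_t(\ww_t,\rho_t) - L_t(\ww_{t+1},\rho_{t+1}).
\end{equation*}

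Next I would sum this over $t=1,\dots,T$. Because the objective changes every trial, the right-hand side does not telescope directly, so I would split it using the add-and-subtract decomposition
\begin{equation*}
\sum_{t=1}^{T}\big[L_t(\ww_t,\rho_t)-L_t(\ww_{t+1},\rho_{t+1})\big] = \big[L_1(\ww_1,\rho_1)-L_{T+1}(\ww_{T+1},\rho_{T+1})\big] + \sum_{t=1}^{T}\big[L_{t+1}(\ww_{t+1},\rho_{t+1})-L_t(\ww_{t+1},\rho_{t+1})\big].
\end{equation*}
The key observation is that the double sigmoid loss is uniformly bounded: since each sigmoid lies in $(0,1)$, we have $0 \le L_{ds} \le 2d+2(1-d)=2$. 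Hence the telescoped boundary term is at most $2$, and each term of the drift sum (a difference of two values, both in $[0,2]$, at the common point $(\ww_{t+1},\rho_{t+1})$) is at most $2$, giving a drift sum of at most $2T$. Combining, the right-hand side is at most $2(T+1)$, and multiplying through by $2\beta$ yields $\mathcal{R}(T) \le 4\beta(T+1) = \tfrac{4\gamma^2}{5}(\RR^2+1)(T+1)$, as claimed.

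The main obstacle is the non-telescoping drift term $\sum_t[L_{t+1}(\ww_{t+1},\rho_{t+1})-L_t(\ww_{t+1},\rho_{t+1})]$ introduced by the online, time-varying objective; controlling it is precisely where the uniform boundedness of $L_{ds}$ (inherited from the sigmoids) is indispensable, and it is this $O(T)$ drift that produces the linear-in-$T$ regret rather than a sublinear one. A secondary subtlety is the stochastic query rule: on a trial with $z_t=0$ no update occurs, so the per-trial descent inequality is vacuous while $\|\nabla L_{ds}\|^2$ still contributes to $\mathcal{R}(T)$. I would handle such trials by the smoothness-and-nonnegativity inequality $\|\nabla L_{ds}(y_tf_t(\xx_t),\rho_t)\|^2 \le 2\beta\,L_t(\ww_t,\rho_t) \le 4\beta$, a consequence of applying the descent lemma with a virtual step $1/\beta$ together with $L_{ds}\ge 0$, which bounds each skipped trial's contribution without reference to any update and is consistent with the same final constant.
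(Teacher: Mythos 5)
Your core argument follows essentially the same route as the paper's proof: apply the $\beta$-smoothness descent inequality to the gradient step, choose $\eta = 1/\beta = \frac{5}{\gamma^2(\RR^2+1)}$ so the linear and quadratic terms combine into $-\frac{1}{2\beta}\|\nabla L_{ds}\|^2$, and control the non-telescoping sum by the same add-and-subtract decomposition together with the uniform bound $0 \le L_{ds} \le 2$, yielding $2(T+1)$ and hence $4\beta(T+1)$. Where you genuinely differ is the treatment of the Bernoulli query variable $z_t$. The paper writes the update as $\Theta_{t+1}-\Theta_t = -\eta z_t \nabla L_{ds}(y_tf_t(\xx_t),\rho_t)$, uses $z_t^2 = z_t$, takes $\E_z$, and then drops the factor $p_t$ citing $p_t \le 1$; but since the coefficient $-\eta+\frac{\beta\eta^2}{2}$ is negative, that inequality goes the wrong way, and what the paper's manipulation really controls is the weighted sum $\sum_t p_t\|\nabla L_{ds}(y_tf_t(\xx_t),\rho_t)\|^2$ rather than $\mathcal{R}(T)$ itself. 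Your pathwise treatment --- the descent inequality on queried trials, and the self-bounding inequality $\|\nabla L_{ds}\|^2 \le 2\beta L_{ds} \le 4\beta$ (valid for any nonnegative $\beta$-smooth function) on skipped trials --- avoids expectations entirely and delivers exactly the deterministic statement the theorem asserts. That is a real gain in rigor over the paper's own argument.

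One piece of accounting in your write-up does need repair. If you bound the drift sum over all $T$ trials by $2T$ (as in your main argument) and then additionally charge $4\beta$ to every skipped trial, you double-count: the result is $4\beta(T+1)+4\beta S$, where $S$ is the number of skipped trials, which exceeds the claimed bound when $S>0$. The fix stays entirely inside your framework: on a skipped trial the parameters are frozen, so $L_t(\ww_t,\rho_t)-L_t(\ww_{t+1},\rho_{t+1})=0$, and such trials should simply be removed from the telescoping. Telescoping only over the queried trials $t_1<\cdots<t_k$ (using that $(\ww_{t_j+1},\rho_{t_j+1})=(\ww_{t_{j+1}},\rho_{t_{j+1}})$) gives $\sum_{j}\bigl[L_{t_j}(\ww_{t_j},\rho_{t_j})-L_{t_j}(\ww_{t_j+1},\rho_{t_j+1})\bigr] \le 2 + 2(k-1) = 2k$, so the queried trials contribute at most $4\beta k$ and the skipped ones at most $4\beta S$, for a total of $4\beta(k+S)=4\beta T \le \frac{4\gamma^2}{5}(\RR^2+1)(T+1)$. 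With that correction your proof is complete, and in fact slightly sharper than the stated bound.
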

The proof is given in Appendix \ref{app-sec:theorem-6}. To prove that DSAL reaches $\epsilon-$stationary point in expectation over iterates, we use following result of \citep{DBLP:journals/corr/abs-1708-00075}. 
\begin{align}
\label{eq:expected-gradient-bound}
    \mathop{\mathbb{E}}_{t \sim \text{Unif} [T] }\left[ \|  \nabla L_{ds}(y f(\xx), \rho) \|^2  \right] \leq \frac{\mathcal{R}(T)}{T} 
\end{align}

\begin{figure*}[t]
\begin{center}
\begin{tabular}{ccccc}
\includegraphics[width=0.38\columnwidth]{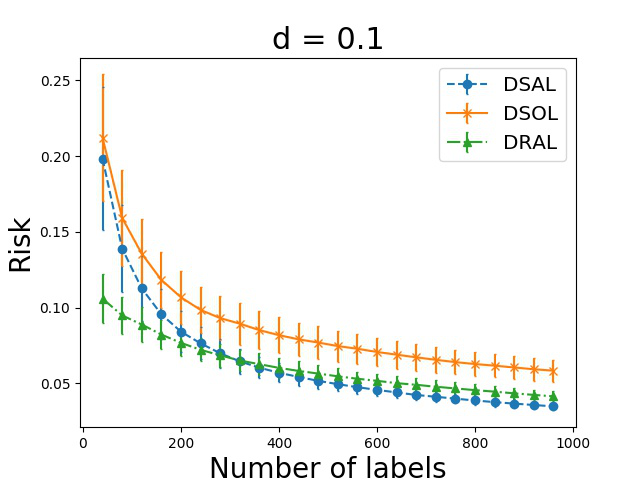}&
\includegraphics[width=0.38\columnwidth]{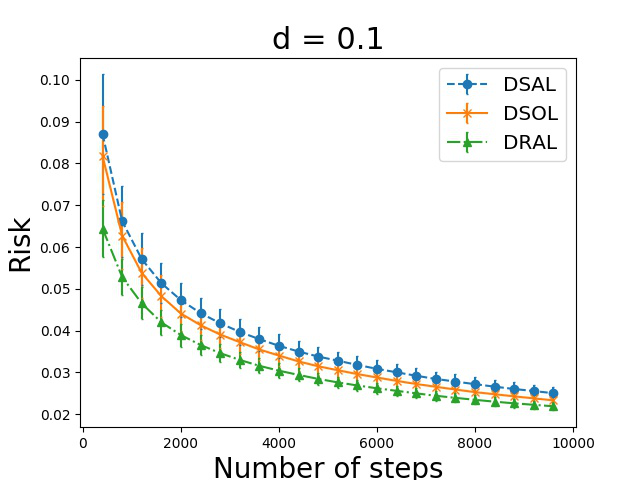}&
\includegraphics[width=0.38\columnwidth]{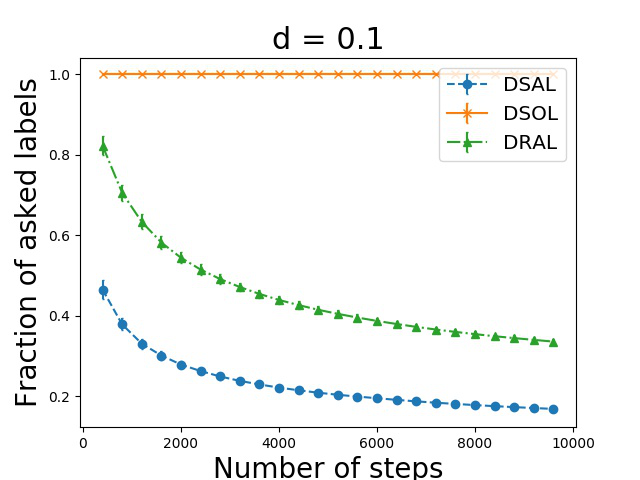}&
\includegraphics[width=0.38\columnwidth]{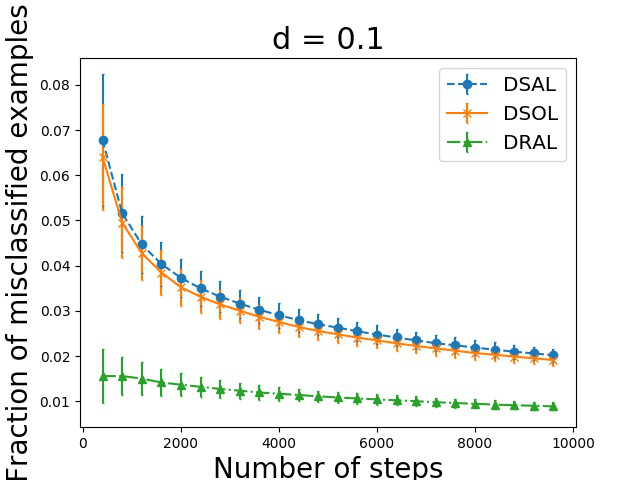}&
\includegraphics[width=0.38\columnwidth]{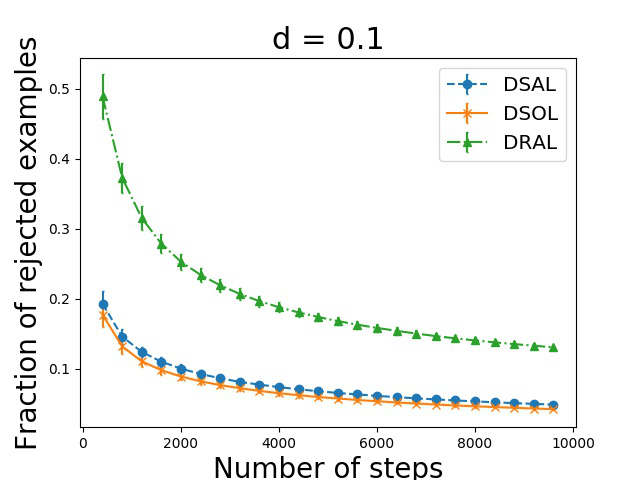}\\
\includegraphics[width=0.38\columnwidth]{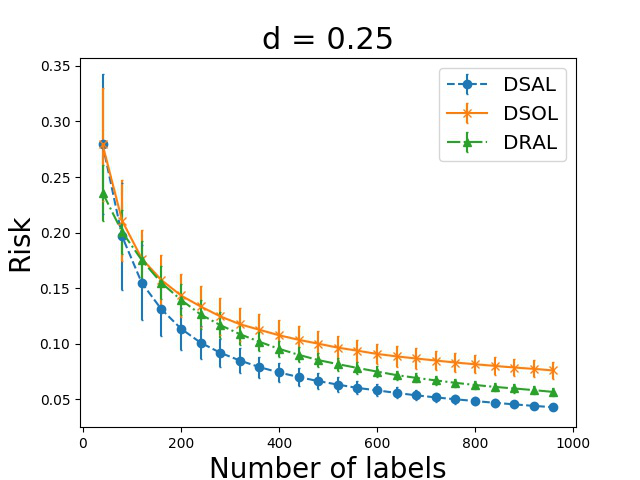}&
\includegraphics[width=0.38\columnwidth]{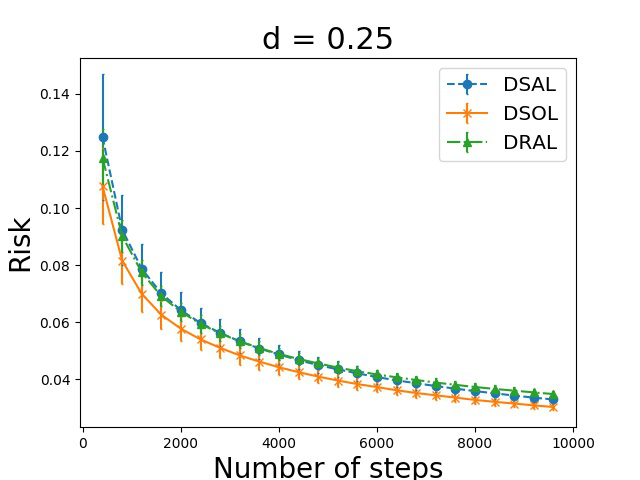}&
\includegraphics[width=0.38\columnwidth]{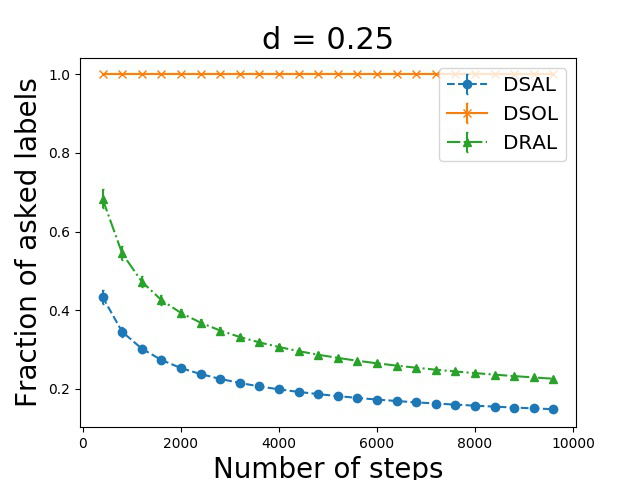}&
\includegraphics[width=0.38\columnwidth]{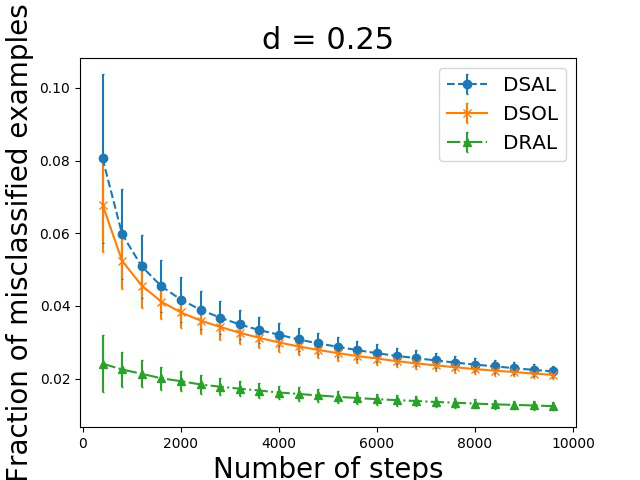}&
\includegraphics[width=0.38\columnwidth]{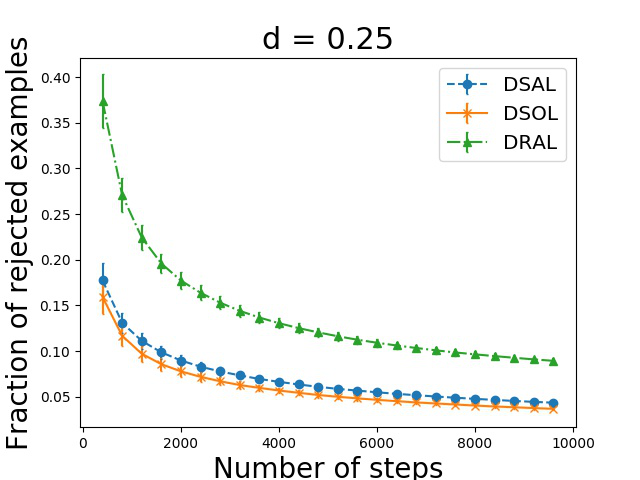}\\
\includegraphics[width=0.38\columnwidth]{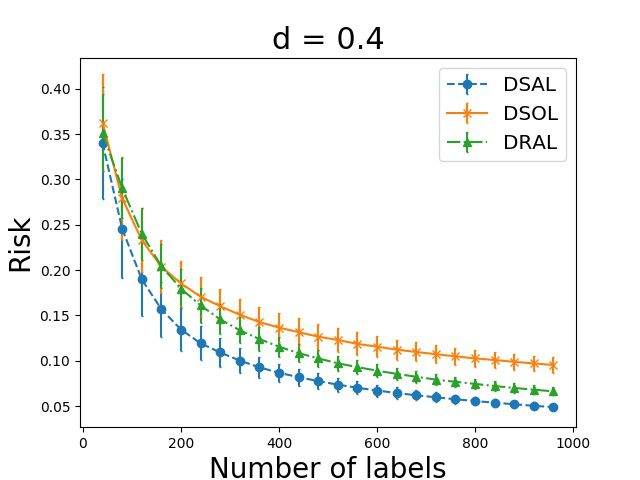}&
 \includegraphics[width=0.38\columnwidth]{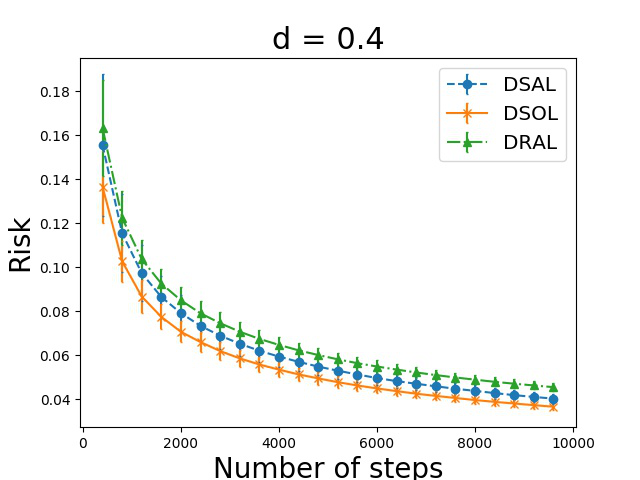}&
\includegraphics[width=0.38\columnwidth]{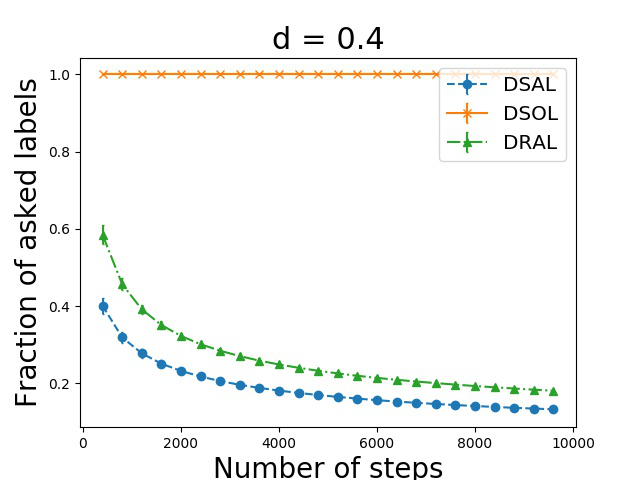} &
\includegraphics[width=0.38\columnwidth]{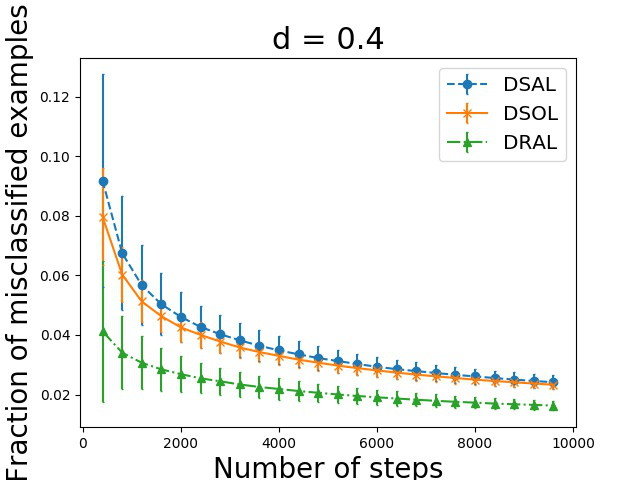} &
\includegraphics[width=0.38\columnwidth]{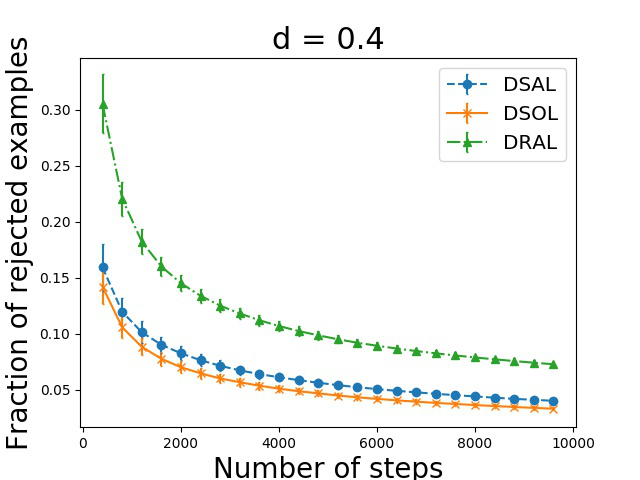} 
\end{tabular}
\end{center}
\caption{Comparison plots for Gisette dataset with linear Kernel function. }
\label{fig:gisette-results}
\end{figure*}

\begin{corollary}
\label{corollary-thm}
For DSAL algorithm, 
\begin{align}
\label{eq:corollary}
    \mathop{\mathbb{E}}_{  t \sim \text{Unif} [T] }\left[ \|  \nabla L_{ds}(y f(\xx),\rho) \|^2  \right] \leq 
    \frac{4 \gamma^2}{5} \left( \RR^2 + 1 \right) \left( 1 + \frac{1}{T} \right)
\end{align}
\end{corollary}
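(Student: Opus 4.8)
The plan is to chain together the two results that immediately precede the corollary, since this statement is essentially an arithmetic consequence of them. First I would invoke the expected-gradient bound from eq.(\ref{eq:expected-gradient-bound}), which controls the quantity of interest by the local regret divided by the horizon:
\begin{equation*}
\mathop{\mathbb{E}}_{t \sim \text{Unif}[T]}\left[\|\nabla L_{ds}(yf(\xx),\rho)\|^2\right] \leq \frac{\mathcal{R}(T)}{T}.
\end{equation*}

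Next I would substitute the explicit local-regret bound established in Theorem~\ref{theorem:local-regret-bound}, namely $\mathcal{R}(T) \leq \frac{4\gamma^2}{5}(\RR^2+1)(T+1)$, into the right-hand side. This yields
\begin{equation*}
\frac{\mathcal{R}(T)}{T} \leq \frac{1}{T}\cdot\frac{4\gamma^2}{5}\left(\RR^2+1\right)(T+1).
\end{equation*}

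The final step is the routine algebraic simplification $\frac{T+1}{T} = 1 + \frac{1}{T}$, which factors the horizon dependence cleanly and produces exactly the claimed bound $\frac{4\gamma^2}{5}(\RR^2+1)(1+\frac{1}{T})$. I do not anticipate any genuine obstacle here: the corollary is a direct composition of the smoothness-based regret guarantee (which in turn relied on the step-size choice $\eta = \frac{5}{\gamma^2(\RR^2+1)}$ and Lemma~\ref{lemma:smoothness}) with the generic online-nonconvex inequality of eq.(\ref{eq:expected-gradient-bound}). The only point worth stating carefully is that the bound holds in expectation over the uniformly sampled iterate index $t$, and that as $T \to \infty$ the right-hand side decreases to the limiting constant $\frac{4\gamma^2}{5}(\RR^2+1)$, certifying that DSAL drives the expected squared gradient norm toward an $\epsilon$-stationary regime controlled by the smoothness parameter.
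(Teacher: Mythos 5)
Your proposal is correct and matches the paper's own proof exactly: the paper likewise obtains the corollary by combining the local regret bound of Theorem~\ref{theorem:local-regret-bound} with the expected-gradient inequality in eq.~(\ref{eq:expected-gradient-bound}) and simplifying $(T+1)/T = 1 + \frac{1}{T}$. There is nothing missing; this is a direct composition of the two preceding results.
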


Using theorem \ref{theorem:local-regret-bound} and eq. (\ref{eq:expected-gradient-bound}), we can get the required result of the Corollary. In the Corollary, We see that upper bound on the expectation of the square of the gradient is inversely proportional to $T$; hence, decreases as the total number of trials $T$ increases. It means that the probability of DSAL algorithm reaches to $\epsilon-$stationary point increases as $T$ increases. 

\section{Experiments}
\label{section:exp}
We show the effectiveness of the proposed active learning approaches on Gisette, Phishing and Guide datasets available on UCI ML repository \citep{Lichman:2013}. 

\subsection{Experimental Setup}
We evaluate the performance of our approaches to learning linear classifiers. In all our simulations, we initialize step size by a small value, and after every trial, step size decreases by a small constant. Parameter $\alpha$ in the double sigmoid loss function is chosen to minimize the average risk and average fraction of queried labels (averaged over 100 runs). 

We need to show that the proposed active learning algorithms are effectively reducing the number of labeled examples required while achieving the same accuracy as online learning. Thus, we compare the active learning approaches with an online algorithm that updates the parameters using gradient descent on the double sigmoid loss at every trial. We call this online algorithm as DSOL (double sigmoid loss based online learning).

\subsection{Simulation Results}
We report the results for three different values of $d \in \{ 0.1, 0.25, 0.4 \}$. The results provided here are based on 100 repetitions of a total number of trial ($T$) equal to 10000. For every value of $d$, we find the average of risk, the fraction of asked labels, fraction of misclassified examples, and fraction of rejected examples over 100 repetitions. We plotted the average of each quantity (e.g., risk, the fraction of asked labels, etc.) as a function of $t \in [T]$. Moreover, the standard deviation of the quantity is denoted by error bar in figures. Figure~\ref{fig:gisette-results}, \ref{fig:phishing-results} and \ref{fig:guidesw-results} show experimental results for Gisette and Phishing  and Guide datasets. We observe the following. 

\begin{figure*}[t!]
\begin{center}
\begin{tabular}{ccccc}
\includegraphics[width=0.38\columnwidth]{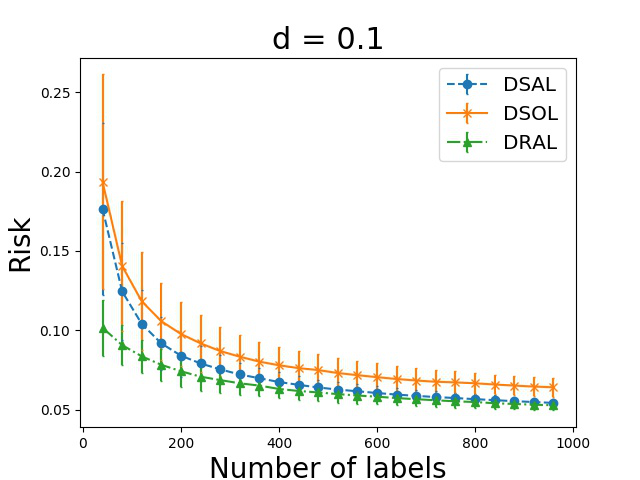}&
\includegraphics[width=0.38\columnwidth]{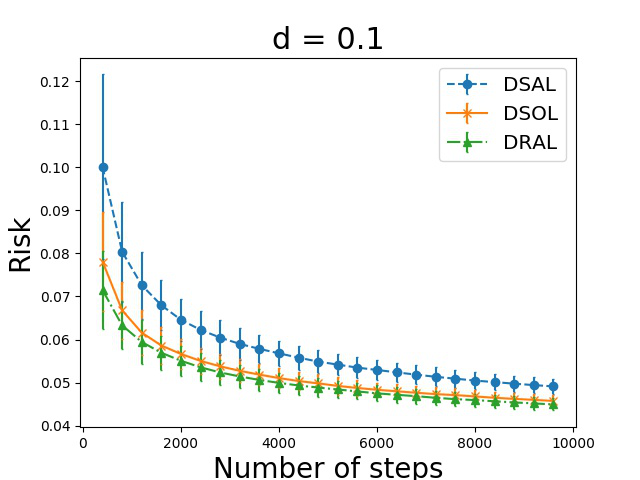}&
\includegraphics[width=0.38\columnwidth]{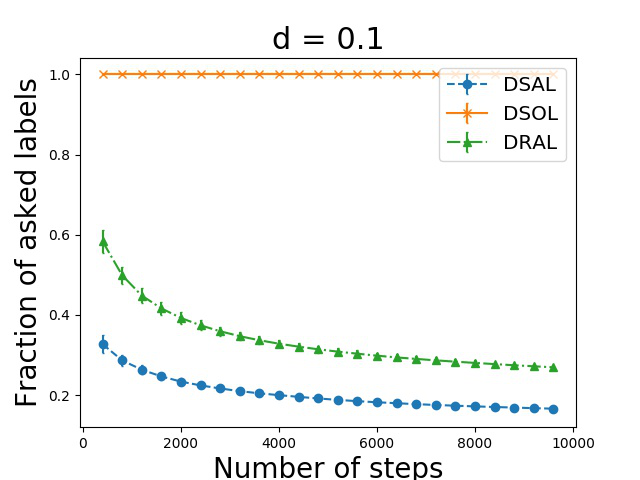}&
\includegraphics[width=0.38\columnwidth]{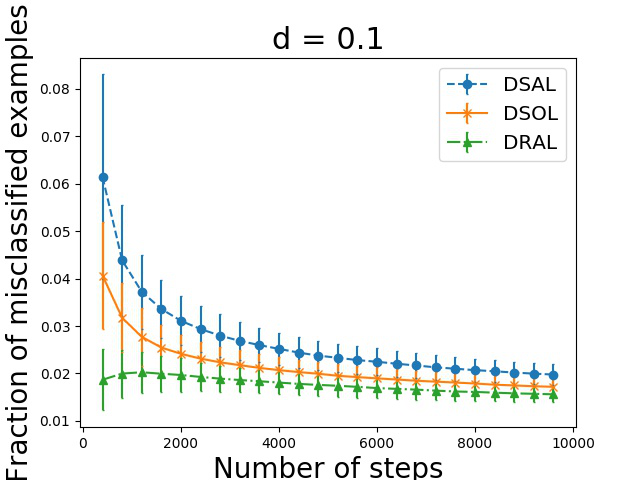}&
\includegraphics[width=0.38\columnwidth]{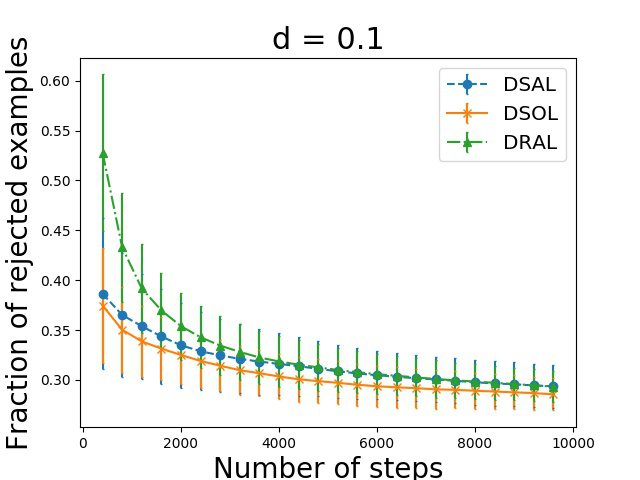}\\
\includegraphics[width=0.38\columnwidth]{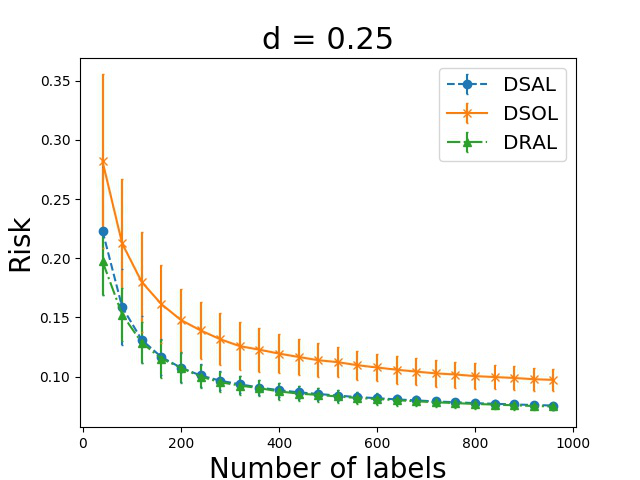}&
\includegraphics[width=0.38\columnwidth]{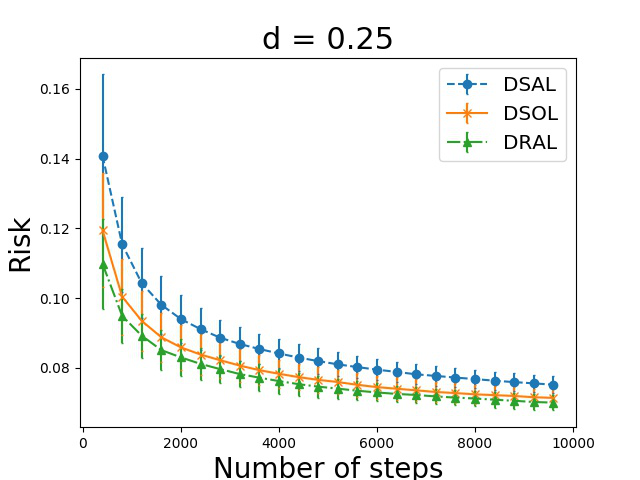}&
\includegraphics[width=0.38\columnwidth]{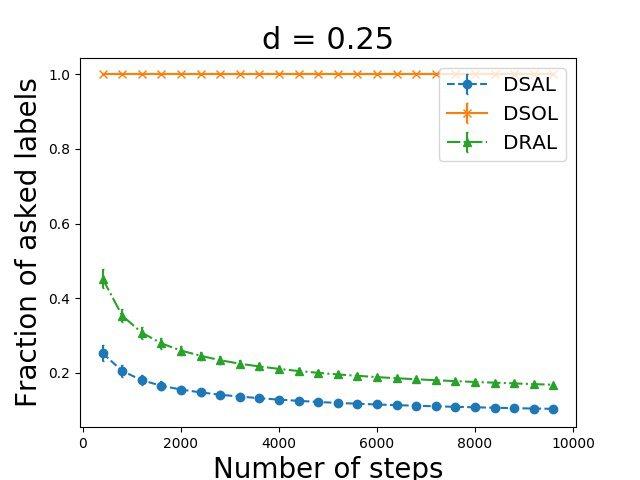}&
\includegraphics[width=0.38\columnwidth]{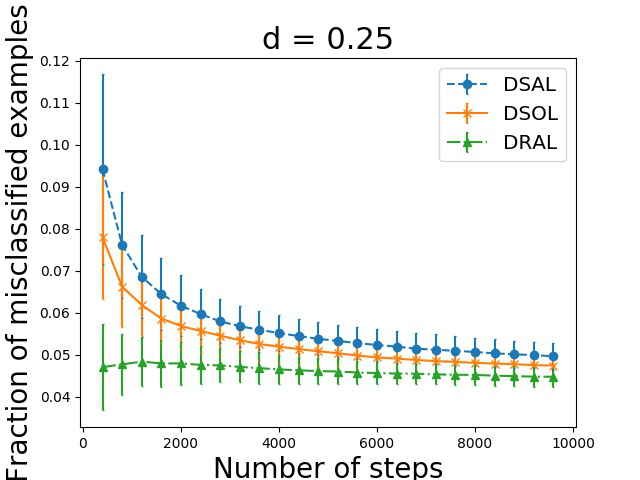}&
\includegraphics[width=0.38\columnwidth]{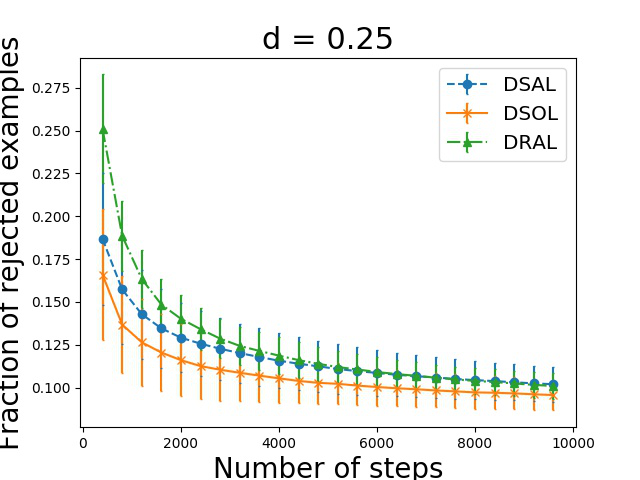}\\
\includegraphics[width=0.38\columnwidth]{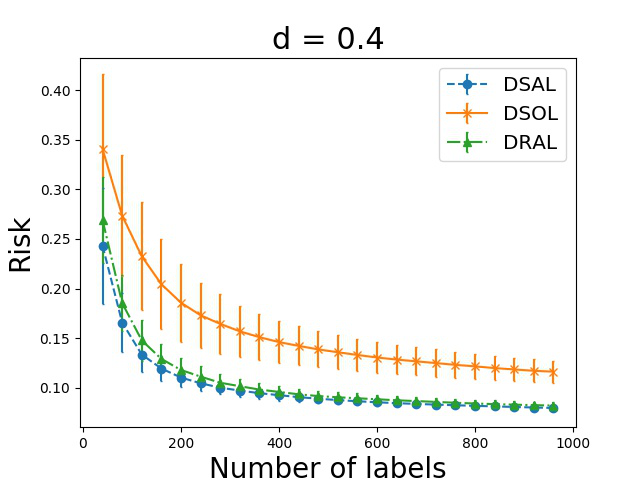}&
\includegraphics[width=0.38\columnwidth]{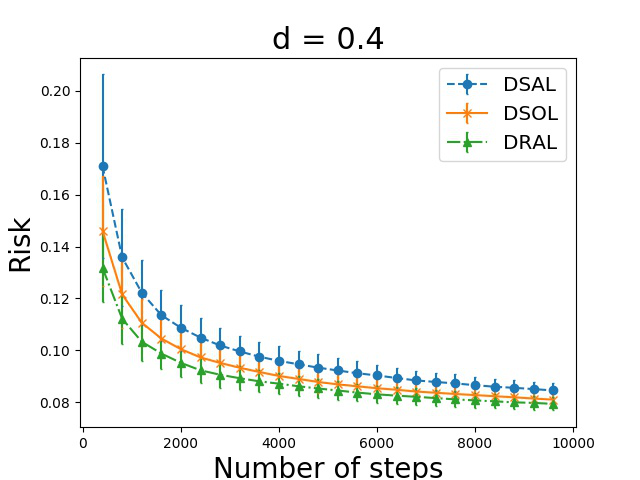}&
\includegraphics[width=0.38\columnwidth]{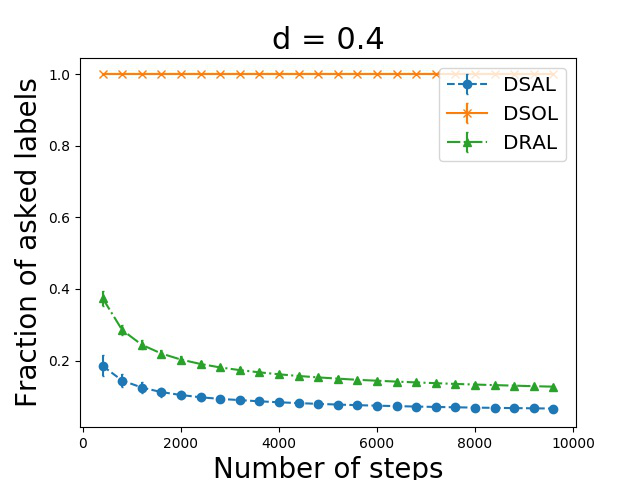}&
\includegraphics[width=0.38\columnwidth]{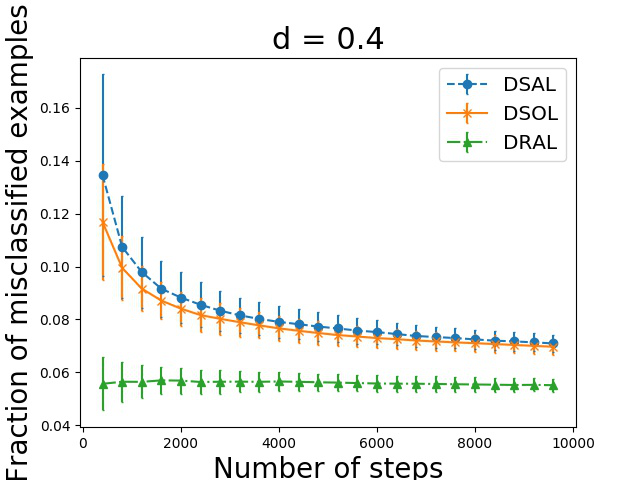}&
\includegraphics[width=0.38\columnwidth]{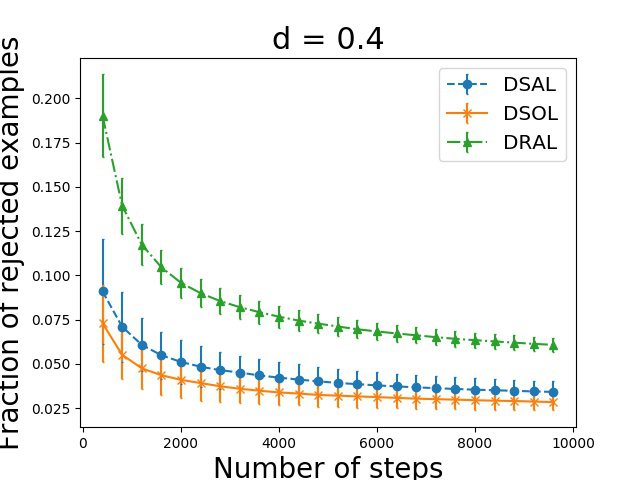}
\end{tabular}
\end{center}
\caption{Comparison plots for Phishing dataset with linear Kernel function. }
\label{fig:phishing-results}
\end{figure*}

\begin{figure*}[ht!]
\begin{center}
\begin{tabular}{ccccc}
\includegraphics[width=0.38\columnwidth]{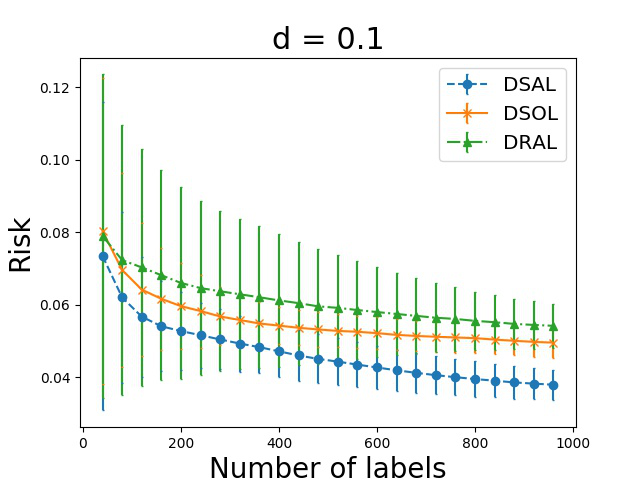}&
\includegraphics[width=0.38\columnwidth]{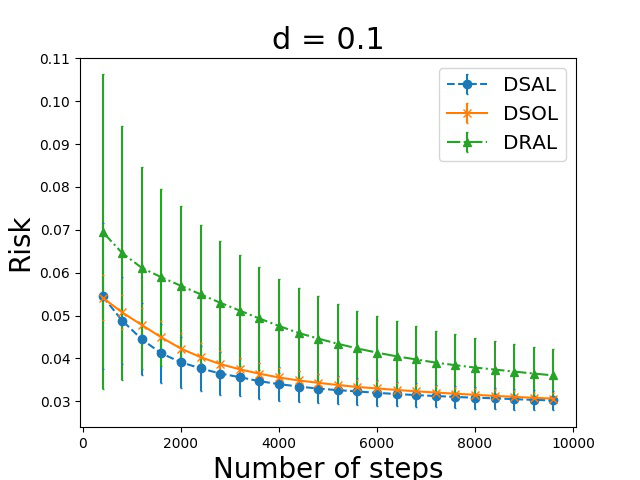}&
\includegraphics[width=0.38\columnwidth]{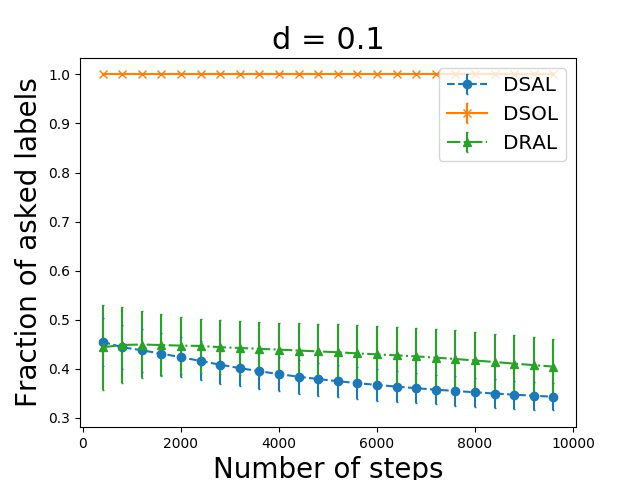}&
\includegraphics[width=0.38\columnwidth]{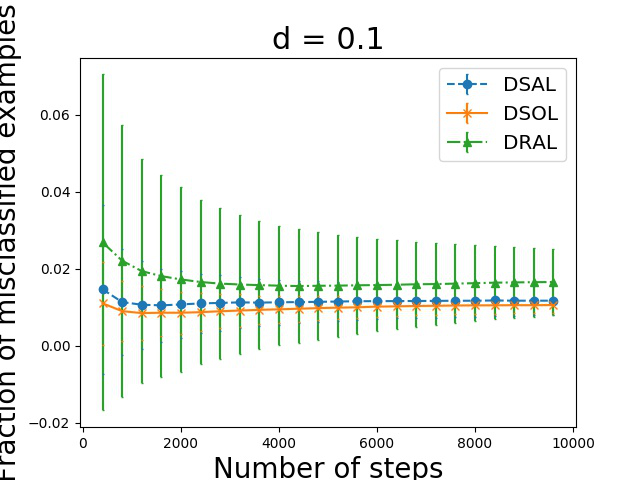}&
\includegraphics[width=0.38\columnwidth]{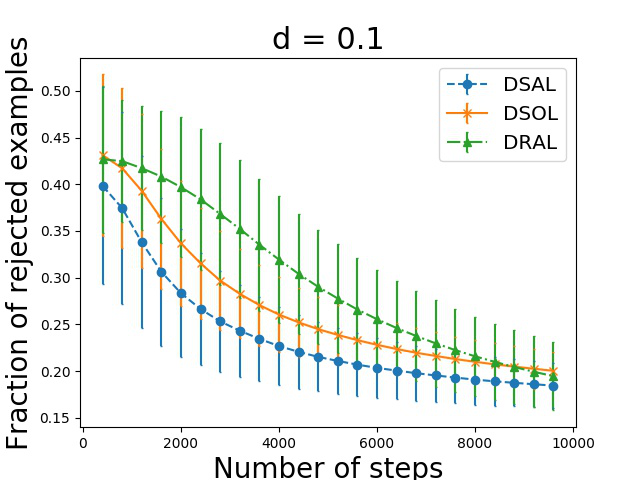}\\
\includegraphics[width=0.38\columnwidth]{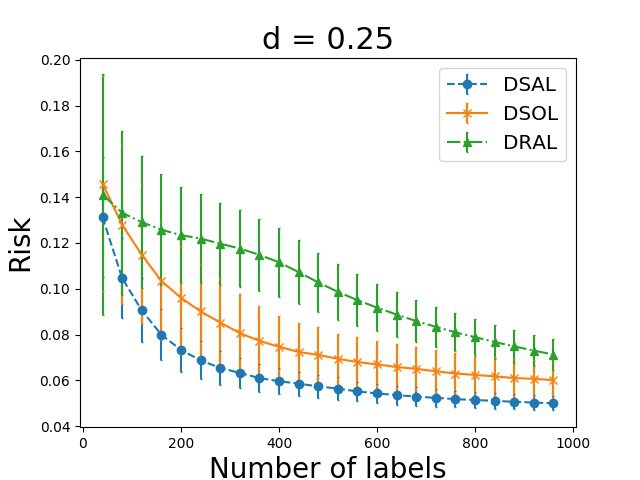}&
\includegraphics[width=0.38\columnwidth]{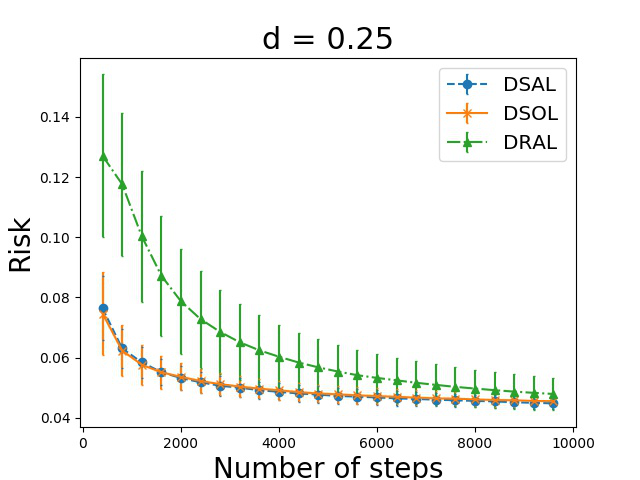}&
\includegraphics[width=0.38\columnwidth]{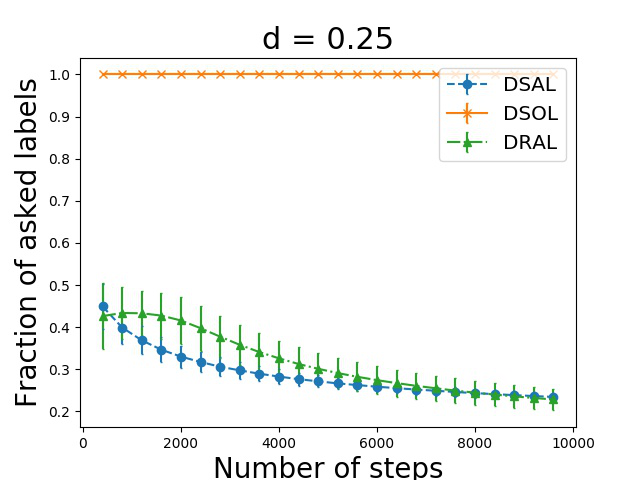}&
\includegraphics[width=0.38\columnwidth]{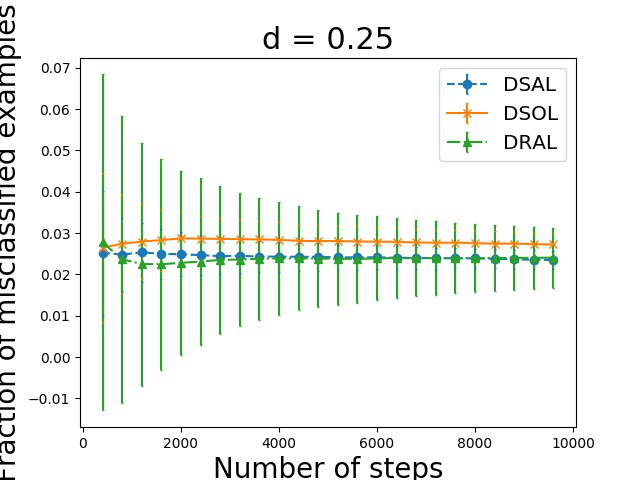}&
\includegraphics[width=0.38\columnwidth]{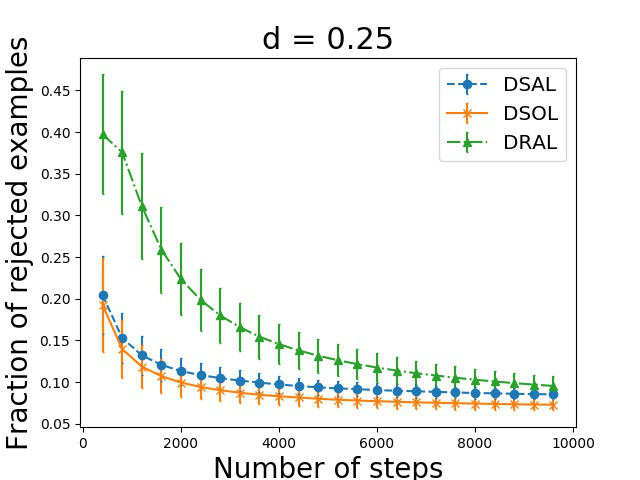}\\
\includegraphics[width=0.38\columnwidth]{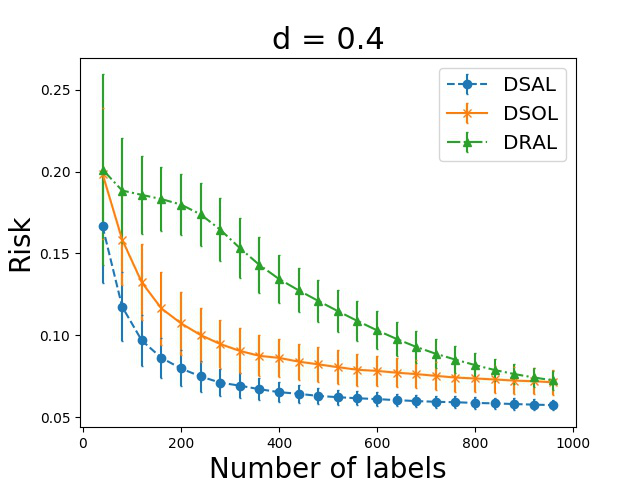}&
\includegraphics[width=0.38\columnwidth]{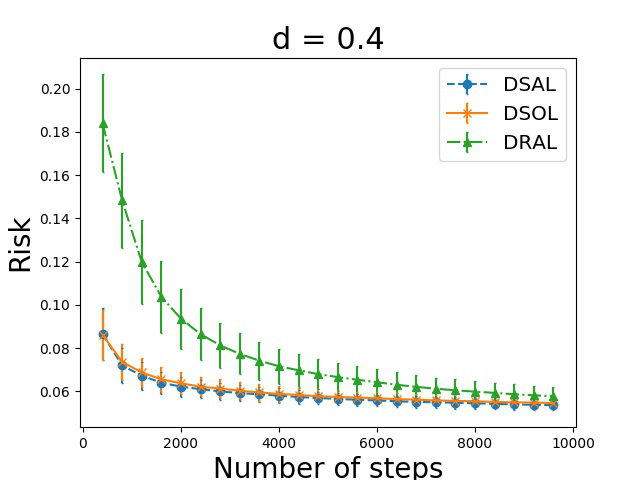} &
\includegraphics[width=0.38\columnwidth]{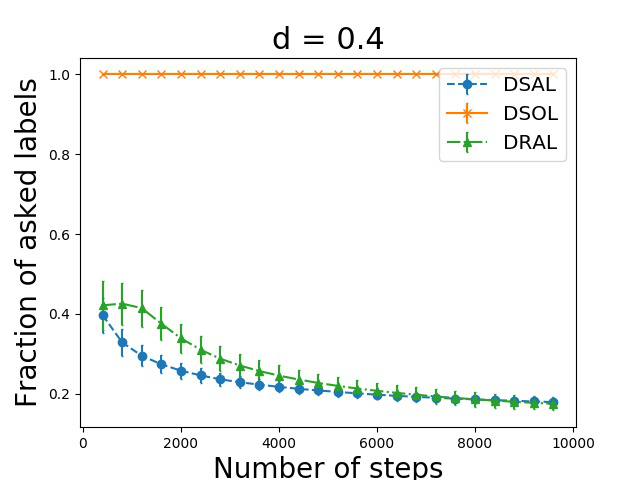} &
\includegraphics[width=0.38\columnwidth]{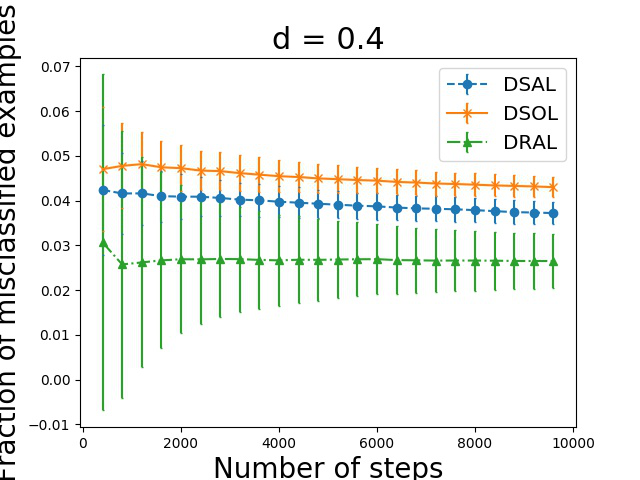} &
\includegraphics[width=0.38\columnwidth]{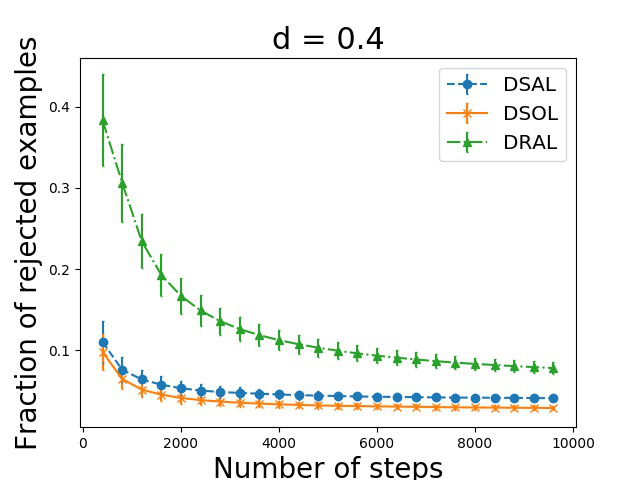} 
\end{tabular}
\end{center}
\caption{Comparison plots for Guide dataset with polynomial kernel function. }
\label{fig:guidesw-results}
\end{figure*}

\begin{itemize}
   \item {\bf Label Complexity Versus Risk:} The first column in each figure shows how the risk goes down with the number of asked labels. For Gisette and Phishing datasets, given the number of queried labels, both DSAL and DRAL achieve lower risk compared to DSOL. For Guide dataset, DSAL always makes lower risk compared to DSOL for a given number of queried labels.
   For Gisette and Guide datasets, DSAL achieves lower risk compared to DRAL with the same number of label queries. For Phishing dataset, DSAL and DRAL perform comparably. 
   
    \item {\bf Average Risk:} The second column in all the figures shows how the average risk (average of $L_d$) goes down with the number of steps ($t$). In all the cases, we see that the risk increases with increasing the value of $d$. We understand that the average risk of DSAL is higher than DRAL for Gisette and Phishing datasets and all values of $d$. For Guide dataset, DSAL always achieves lower risk compared to DRAL.
    
    For Gisette and Guide datasets, DSAL achieves similar risk as DSOL. For Phishing dataset, DSOL performs marginally better than DSAL and DRAL. DRAL does better risk minimization compared to DSOL for Phishing dataset. For Guide dataset, DRAL performs comparable to DSOL as $t$ becomes larger except for $d=0.1$. 
    
    \item {\bf Average Fraction of Asked Labels:} Third column in all the figures show the fraction of labels asked for a given time step $t$. We observe that the fraction of asked labels decreases with increasing $d$. For Gisette and Phishing datasets, DSAL asks significantly less number of labels as DRAL. This happens because DRAL asks labels every time in a specific region and completely ignores other regions, but DSAL asks labels in every region with some probability.
    For Guide dataset, the fraction of labels asked to become the same for both DSAL and DRAL as $t$ becomes larger.
    
    \item {\bf Average Fraction of Misclassified Examples:} The fourth column of all the figures, shows how the average fraction of misclassified examples goes down with $t$. We observe that the misclassification rate goes up with increasing $d$. We see that DRAL achieves a minimum average misclassification rate in all the cases compared to DSOL and DSAL except for the Guide dataset with $d=0.1$ value. For Gisette and Phishing datasets, DSAL achieves a comparable average misclassification rate compared to DSOL for all the cases. For Guide dataset, DSAL achieves a lower misclassification rate compared to DSOL except for $d=0.1$. 
    \item {\bf Average Fraction of Rejected Examples:} The fifth column in each figure shows how the rejection rate goes down with steps $t$. We see that the average fraction of rejected examples is higher in DRAL than DSAL and DSOL. Also, the rejection rate decreases with increasing $d$.
\end{itemize}
Thus, we see that the proposed active learning algorithms DRAL and DSAL effective reduce the number of labels required for learning the reject option classifier and perform better compared to online learning.

\section{Conclusion}
\label{section:conc}
In this paper, we have proposed novel active learning algorithms DRAL and DSAL. We presented mistake bounds for DRAL and convergence results for DSAL. We experimentally show that the proposed active learning algorithms reduce the number of labels required while maintaining a similar performance as online learning. 

\bibliography{Final-Biblography}
\bibliographystyle{aaai}

\appendix

\section{Kernelized Active Learning algorithm using Double Ramp Loss}
\label{app-sec:kernelized-DRAL}
In this section, we describe the kernel version of the active learning algorithm proposed in Algorithm~1 for learning nonlinear classifiers. We use the usual kernel trick to determine the classifier parameters. For every example presented in the algorithm, we maintain a variable $a_t$ and save it. The classifier after trial $t$ is represented as $f_{t}(\cdot) = \underset{s\leq t:Q_s=1}{\sum} a_s {\cal K}(\xx_s,\cdot)$. Thus, the algorithm has to maintain the $a_t$ values for all the examples for which the label was queried. The detailed algorithm is described as follows.
\begin{algorithm}[h]
\caption{Kernelized Active Learning algorithm using Double Ramp Loss}
\label{algo1}
\begin{algorithmic}
\State {\bf Input:} $d \in (0, 0.5)$, step size $\eta$, training set $S$
\State {\bf Output:} Weight vector $\ww$, Rejection width $\rho$
\State {\bf Initialize:} $a_{0}, \rho_{0}$
\For{$t = 1,..,T$} 
\State Sample $\xx_{t}$ from the training set $S$
\State Set $f_{t-1}(\xx) = \sum_{i=1}^{t-1} a_{i} \K (\xx_{i}, \xx)$
\If{$\rho_{t-1} -1 \leq |f_{t-1}(\xx_{t})| \leq \rho_{t-1} + 1$}
\State Set $Q_t=1$
\State Query the label $y_{t}$ of $\xx_t$
\If{($\rho_{t-1} - 1 \leq \; y_{t}f_{t-1}(\xx_{t}) \; \leq \rho_{t-1} + 1$)}
\State $a_{t} = \eta d y_{t}$ . 
\State $\rho_{t} = \rho_{t-1} - \eta d $
\ElsIf{($-\rho_{t-1} - 1 \leq y_{t}f_{t-1}(\xx_{t}) \leq -\rho_{t-1} + 1$)}
\State $a_{t} = \eta (1 - d) y_{t}$. 
\State $\rho_{t} = \rho_{t-1} + \eta (1-d)$. 
\EndIf
\Else 
\State $a_{t} = 0$. 
\State $\rho_{t} = \rho_{t-1}$.
\EndIf
\EndFor
\end{algorithmic}
\end{algorithm}

\section{Proof of Lemma~1}
\label{app-sec:lemma-1}
To prove Lemma 1, we will first prove Lemma 8 and Lemma 9.
\begin{lemma}
Assuming $\| \ww \| \leq \WW $ and $\| \xx \| \leq \RR$, Double ramp loss $L_{dr}$ satisfies following inequality. 
\begin{align*}
    L_{dr}(y ( \ww \cdot \xx ), \rho ) \geq m_{11} d + m_{12} d \left( \rho - y ( \ww \cdot \xx ) \right)
\end{align*}
\noindent where $m_{11} = m_{12} = m_{1} = \min \left( \frac{1}{ \rho }, \frac{2}{ d( 1 + \rho + \WW \RR ) } \right)$.
\end{lemma}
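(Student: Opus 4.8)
The plan is to treat the loss as a one-dimensional function of the signed margin $u := y(\ww\cdot\xx)$ and to recognize the claimed right-hand side as a straight line that must sit underneath it. First I would record that, since $\|\ww\|\le\WW$ and $\|\xx\|\le\RR$, the margin is confined to the compact interval $u\in[-\WW\RR,\WW\RR]$. Writing the elementary ramp $g(a)=[1+a]_+-[-1+a]_+$ (which is $0$ for $a\le-1$, equals $1+a$ on $[-1,1]$, and saturates at $2$ for $a\ge 1$), the double ramp loss factors as $L_{dr}(u,\rho)=d\,g(\rho-u)+(1-d)\,g(-\rho-u)$. Hence, as a function of $u$, it is the non-increasing, piecewise-linear profile $\phi(u)$ that equals $0$ for $u\ge\rho+1$, falls linearly with slope $-d$ on $[\rho-1,\rho+1]$, is flat at the value $2d$ on the rejection plateau $[-\rho+1,\rho-1]$, falls with slope $-(1-d)$ on $[-\rho-1,-\rho+1]$, and is flat at its maximum value $2$ for $u\le-\rho-1$.

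Next I would observe that the claimed bound $m_1 d + m_1 d(\rho-u)=m_1 d\,(1+\rho-u)$ is precisely the affine function $\ell(u)$ of slope $-m_1 d$ that passes through the zero-loss corner $(\rho+1,0)$, i.e. the exact point where $\phi$ first hits $0$. The whole lemma therefore reduces to establishing $\phi(u)\ge\ell(u)$ on $[-\WW\RR,\WW\RR]$ for the stated $m_1$.

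The key step is that $\phi-\ell$ is continuous and piecewise affine on a compact interval, so it attains its minimum at an endpoint or at a kink of $\phi$; it thus suffices to check the inequality at the breakpoints $\{\rho+1,\rho-1,-\rho+1,-\rho-1\}$ lying in range and at the left endpoint $u=-\WW\RR$. Each check reduces to a scalar inequality on $m_1$: the corner $(\rho+1,0)$ gives equality; the left edge of the plateau $u=-\rho+1$, where $\phi=2d$ and $\ell=2\rho m_1 d$, forces $m_1\le 1/\rho$; and the extreme negative margin $u=-\WW\RR$, where $\phi=2$ and $\ell=m_1 d(1+\rho+\WW\RR)$, forces $m_1\le 2/\big(d(1+\rho+\WW\RR)\big)$. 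Choosing $m_1$ to be the minimum of these two quantities makes both binding constraints hold, and I would then verify that the remaining breakpoints are automatically implied: $u=\rho-1$ requires only $m_1\le 1$, which follows from $m_1\le 1/\rho\le 1$, and $u=-\rho-1$ requires $m_1\le 1/\big(d(1+\rho)\big)$, which follows from $m_1\le 2/\big(d(1+\rho+\WW\RR)\big)$ once $\WW\RR\ge\rho+1$. This is exactly where the $\min$ defining $m_1$ originates.

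The main obstacle I anticipate is bookkeeping across regions rather than any deep difficulty: one must pin down the profile $\phi$ correctly (which presumes $\rho\ge 1$, so that the plateau $[-\rho+1,\rho-1]$ is nondegenerate and the two ramps do not overlap) and handle the boundary case $\WW\RR<\rho+1$, in which the left endpoint $-\WW\RR$ lands on a sloped piece rather than on the flat top, so the extreme-margin constraint is only loosened and the bound still holds. Once the breakpoint inequalities are assembled, each one is a routine linear comparison in $m_1$.
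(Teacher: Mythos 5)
Your proposal is correct and follows essentially the same route as the paper: the paper likewise treats the right-hand side as an affine lower bound and checks it at the points $y(\ww\cdot\xx)=\rho+1$, $-\rho+1$, and $-\WW\RR$, obtaining exactly your two binding constraints $m_1\le 1/\rho$ and $m_1\le 2/\bigl(d(1+\rho+\WW\RR)\bigr)$. If anything you are more careful than the paper, which asserts without justification that these three point-checks suffice ("it is easy to show") and silently relies on the same side conditions ($\rho\ge 1$, so the loss equals $2d$ at $-\rho+1$, and $\WW\RR\ge\rho+1$, so it equals $2$ at $-\WW\RR$) that you flag explicitly via the kink-plus-endpoint reduction.
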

\begin{proof}
Assume 
\begin{align}
\label{eq:ldr-lower-bound1}
    L_{dr}(y ( \ww \cdot \xx ), \rho ) \geq m_{11} d + m_{12} d \left( \rho - y ( \ww \cdot \xx ) \right)
\end{align}
\noindent for some $m_{11}$ and $m_{12}$. We will prove that for $m_{11} = m_{12} = m_{1}$, eq.(\ref{eq:ldr-lower-bound1}) satisfies for all values of $y(\ww \cdot \xx)$ and $\rho$ of consideration.  It is easy to show that if eq.(\ref{eq:ldr-lower-bound1}) satisfies at $y( \ww \cdot \xx) = \rho + 1 $, $y( \ww \cdot \xx) = - \rho + 1$ and $y( \ww \cdot \xx) = - \WW \RR$ then eq.(\ref{eq:ldr-lower-bound1}) will satisfy for all $ y( \ww \cdot \xx ) $ for which $ \| \ww \| \leq \WW $ and $\| \xx \| \leq \RR$. At $ y( \ww \cdot \xx) = \rho + 1 $, eq.(\ref{eq:ldr-lower-bound1}) will be 
\begin{align}
\label{eq:lr-lower-bound11}
    \nonumber 0 &\geq m_{11} d + m_{12} d (-1) \\
    m_{12} &\geq m_{11}
\end{align}
\noindent At $ y( \ww \cdot \xx) = -\rho + 1 $, eq.(\ref{eq:ldr-lower-bound1}) will be
\begin{align}
\label{eq:lr-lower-bound12}
    2d \geq m_{11} d + m_{12} d (2 \rho - 1)
\end{align}
\noindent At $ y( \ww \cdot \xx ) = - \WW \RR $, eq.(\ref{eq:ldr-lower-bound1}) will be
\begin{align}
\label{eq:lr-lower-bound13}
    2 \geq m_{11} d + m_{12} d ( \rho + \WW \RR )
\end{align}
$m_{11} = m_{12} = m_{1} = \min \left( \frac{1}{ \rho }, \frac{2}{ d( 1 + \rho + \WW \RR ) } \right) $ will satisfy all three equations (i.e. eq.(\ref{eq:lr-lower-bound11}), eq.(\ref{eq:lr-lower-bound12}) and eq.(\ref{eq:lr-lower-bound13}) ).
\end{proof}

\begin{lemma}
Assuming $\| \ww \| \leq \WW $, $\| \xx \| \leq \RR$ and $\WW \RR > \rho$, Double ramp loss $L_{dr}$ satisfies following inequality. 
\begin{align*}
    L_{dr}(y ( \ww \cdot \xx ), \rho ) \geq m_{21}(1 + d) - m_{22} (1 - d) \left( \rho + y ( \ww \cdot \xx ) \right)
\end{align*}
\noindent where $m_{21} = \min \left( \frac{2 (2 \rho + 1) }{ (1+d)(\WW \RR + \rho + 1) }, \frac{1 + d(\WW \RR - \rho)}{ (1+d)( \WW \RR - \rho + 1 ) } \right)$ and $m_{22} = \max \left( \frac{2}{ (\WW \RR + \rho + 1) (1 - d) }, \frac{ (2-d)(\WW \RR - \rho) + 1 }{ (\WW \RR - \rho + 1)(\WW \RR - \rho)(1 - d) } \right) $.
\end{lemma}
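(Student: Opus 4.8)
The plan is to mirror the argument used for the previous lemma. Writing $u = y(\ww\cdot\xx)$, note that by Cauchy--Schwarz $u$ ranges over $[-\WW\RR,\WW\RR]$, and the claimed inequality reads
\[
L_{dr}(u,\rho) \geq m_{21}(1+d) - m_{22}(1-d)(\rho+u),
\]
whose right-hand side is affine in $u$. The key structural fact is that $L_{dr}(\cdot,\rho)$ is itself piecewise linear and non-increasing in $u$: it equals $2$ for $u\le-\rho-1$, ramps down with slope $-(1-d)$ to the value $2d$ at $u=-\rho+1$, is constant at $2d$ on $[-\rho+1,\rho-1]$, ramps down with slope $-d$ to $0$ at $u=\rho+1$, and equals $0$ thereafter. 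Hence the difference between $L_{dr}(u,\rho)$ and the affine right-hand side is piecewise linear in $u$, so it is nonnegative on the whole interval $[-\WW\RR,\WW\RR]$ if and only if it is nonnegative at the endpoints $u=\pm\WW\RR$ and at the breakpoints $u\in\{-\rho-1,-\rho+1,\rho-1,\rho+1\}$ lying inside the interval.

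First I would reduce the problem to this finite collection of linear inequalities in the two unknowns $m_{21},m_{22}$. The two genuinely binding constraints come from the extreme points: the right end, where $L_{dr}$ attains its smallest value, and the left end $u=-\WW\RR$, where $L_{dr}$ attains its largest value. Since the hypothesis is only $\WW\RR>\rho$, two regimes arise according to whether $\WW\RR>\rho+1$ or $\rho<\WW\RR\le\rho+1$, and these determine whether the right endpoint sits in the zero region or on the inner ramp, and whether the left endpoint sits in the value-$2$ region or on the outer ramp. In the regime $\WW\RR>\rho+1$, forcing equality simultaneously at the graph points $(\rho+1,0)$ and $(-\WW\RR,2)$ gives a two-by-two linear system whose solution is $m_{22}=2/((1-d)(\WW\RR+\rho+1))$ and $m_{21}=2(2\rho+1)/((1+d)(\WW\RR+\rho+1))$, which are exactly the first candidates in the stated $\min$ and $\max$. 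The other regime produces, in the same fashion, the second candidate in each expression.

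The role of the $\min$ over the $m_{21}$ candidates and the $\max$ over the $m_{22}$ candidates is then to select, uniformly over all $\WW\RR>\rho$, the pair that keeps the affine function below $L_{dr}$ at every breakpoint at once. The essential subtlety is a sign change: the term $-m_{22}(1-d)(\rho+u)$ lowers the bound (helps) at points with $u>-\rho$ but raises it (hurts) at points with $u<-\rho$. Consequently the loss-zero constraint at $u=\rho+1$ bounds $m_{22}$ from below, while the value-$2$ constraint at $u=-\WW\RR$ caps it from above, so $m_{22}$ is squeezed between the two, and the $\max$ picks the tightest admissible lower requirement while the upper cap must be checked to hold.

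The final verification, where I expect the real work to lie, is to confirm that with the stated $\min$/$\max$ values the remaining interior breakpoints $u\in\{-\rho-1,-\rho+1,\rho-1\}$ also satisfy their inequalities, i.e. that the nonconvex staircase $L_{dr}$ stays above the single affine secant determined by the two extreme points. This amounts to reconciling the coupled constraints across the sign change of $\rho+u$ and across the two regimes for $\WW\RR$; once that is done, every remaining step is a routine substitution of the breakpoint values of $L_{dr}$ into the posited inequality.
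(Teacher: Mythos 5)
Your opening reduction---compare the affine right-hand side with the piecewise-linear $L_{dr}$ only at breakpoints and endpoints---is the same device the paper uses (the paper checks just $u:=y(\ww\cdot\xx)\in\{\rho+1,\,-\rho+1,\,-\WW\RR\}$ and lets monotonicity of the affine function cover the rest), and your derivation of the first candidate pair is correct: forcing equality at $(\rho+1,0)$ and $(-\WW\RR,2)$ does give $m_{22}=\frac{2}{(1-d)(\WW\RR+\rho+1)}$ and $m_{21}=\frac{2(2\rho+1)}{(1+d)(\WW\RR+\rho+1)}$. But your account of the second candidate pair is wrong. In the regime $\rho<\WW\RR\le\rho+1$ the endpoints sit at $\big(\WW\RR,\,d(\rho+1-\WW\RR)\big)$ and $\big({-\WW\RR},\,2d+(1-d)(1+\WW\RR-\rho)\big)$, and forcing equality there yields $m_{22}=\frac{1+\WW\RR-\rho}{2(1-d)\WW\RR}$, which is not the stated $\frac{(2-d)(\WW\RR-\rho)+1}{(\WW\RR-\rho+1)(\WW\RR-\rho)(1-d)}$. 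In fact both stated pairs satisfy the \emph{same} equality $m_{21}(1+d)+m_{22}(1-d)(\WW\RR-\rho)=2$, i.e.\ both lie on the constraint line coming from $u=-\WW\RR$ with $L_{dr}=2$; they are not artifacts of two regimes for $\WW\RR$.

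More importantly, the step you defer (``where I expect the real work to lie'') is precisely the content of the lemma, and it cannot be waved off, because neither candidate pair works alone: writing $s=\WW\RR-\rho$, pair~1 violates the constraint from $u=-\rho+1$, namely $m_{22}(1-d)\ge m_{21}(1+d)-2d$, whenever $d(\WW\RR+\rho+1)<2\rho$ (e.g.\ $d=0.1$, $\rho=1$, $\WW\RR=2$), while pair~2 violates the constraint from $u=\rho+1$, namely $m_{22}(1-d)(2\rho+1)\ge m_{21}(1+d)$, once $s$ is large (its left side grows linearly in $s$, its right side like $ds^2/(s+1)$). So the min/max combination is essential, and verifying it requires three observations absent from your plan: (i) since both pairs saturate the $u=-\WW\RR$ constraint, whose line has $m_{22}$ decreasing in $m_{21}$, the pair $\big(\min m_{21},\max m_{22}\big)$ also lies on that line, so the upper-bound constraint still holds; (ii) the lower bound from $u=\rho+1$ is increasing in $m_{21}$ and is saturated by pair~1, hence $\max m_{22}\ge m_{22}^{(1)}\ge$ that bound evaluated at $\min m_{21}$; (iii) the lower bound from $u=-\rho+1$ is increasing in $m_{21}$ and holds for pair~2 (it reduces to $ds^2+ds+s+1\ge 0$), hence $\max m_{22}\ge m_{22}^{(2)}\ge$ that bound at $\min m_{21}$. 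The paper also compresses this verification into ``one can check,'' but it states the three linear constraints explicitly and its claimed values do pass them; your proposal, as written, neither identifies the correct origin of the second pair nor performs the check that makes the min/max choice legitimate, so it remains a plan rather than a proof.
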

\begin{proof}
Assume 
\begin{align}
\label{eq:ldr-lower-bound2}
    L_{dr}(y ( \ww \cdot \xx ), \rho ) \geq m_{21}(1 + d) - m_{22} (1 - d) \left( \rho + y ( \ww \cdot \xx ) \right)
\end{align}
\noindent for some $m_{21}$ and $m_{22}$. It is easy to show that if eq.(\ref{eq:ldr-lower-bound2}) satisfies at $y( \ww \cdot \xx) = \rho + 1 $, $y( \ww \cdot \xx) = - \rho + 1$ and $y( \ww \cdot \xx) = - \WW \RR$ then eq.(\ref{eq:ldr-lower-bound2}) will satisfy for all values of $ y( \ww \cdot \xx ) $ and $\rho$ of consideration. At $ y( \ww \cdot \xx) = \rho + 1 $, eq.(\ref{eq:ldr-lower-bound2}) will be 
\begin{align}
\label{eq:lr-lower-bound21}
    \nonumber 0 \geq & \; m_{21} (1 + d) - m_{22} (1 - d) (2\rho + 1) \\[5pt]
    m_{22}& \geq  \frac{m_{21} (1 + d) }{ (1 - d) (2 \rho + 1) }
\end{align}
At $ y( \ww \cdot \xx) = -\rho + 1 $, eq.(\ref{eq:ldr-lower-bound2}) will be
\begin{align}
\label{eq:lr-lower-bound22}
    \nonumber 2d &\geq m_{21} (1 + d) - m_{22} (1 - d) \\[5pt]
    m_{22} & \geq \frac{ m_{21} (1 + d) - 2d}{ (1 - d) }
\end{align}
At $ y( \ww \cdot \xx ) = - \WW \RR $, eq.(\ref{eq:ldr-lower-bound2}) will be
\begin{align}
\label{eq:lr-lower-bound23}
    \nonumber 2 &\geq m_{21} (1 + d) - m_{22} (\rho - \WW \RR) (1 - d) \\[5pt]
    m_{22} &\leq \frac{ 2 - m_{21} (1 + d) }{ (\WW \RR - \rho)(1 - d) }
\end{align}
\noindent One can check that $m_{21} = \min \left( \frac{ 2(2 \rho + 1) }{ (1 + d)( \WW \RR + \rho +1 ) }, \frac{ 1 + d(\WW \RR - \rho) }{ (1 + d)( \WW \RR - \rho + 1 ) } \right)$ and $m_{22} = \max \left( \frac{2}{ (\WW \RR + \rho + 1) (1 - d) }, \frac{ (2-d)(\WW \RR - \rho) + 1 }{ (\WW \RR - \rho + 1)(\WW \RR - \rho)(1 - d) } \right) $ satisfies eq.(\ref{eq:lr-lower-bound21}), eq.(\ref{eq:lr-lower-bound22}) and eq.(\ref{eq:lr-lower-bound23}).
\end{proof}

\noindent Now, we will prove Lemma 1 using Lemma 8 and Lemma 9. 
\begin{equation*}
\begin{aligned}
&\|  \ww_{t}  - {\alpha}\ww \|^{2} - {\| \ww_{t+1} - {\alpha}\ww \|}^{2}
={\| \ww_{t} - {\alpha}\ww \|}^{2}\\
&- {\| \ww_{t} + {\eta dy_{t}\xx_{t}}[\fc]} {+  {\eta (1-d)y_{t}\xx_{t}}[\sc] - {\alpha}\ww \|}^{2}
\end{aligned}
\end{equation*}
Note that only one of four indicator $C_{t}, R_{1t}, R_{2t}, M_{t}$ can be true at time $t$ therefore following equations will be true. 
\begin{equation*}
\begin{aligned}
&[\fc]^2 = [\fc]\\
&[\sc]^2 = [\sc]\\
&[\fc][\sc] = 0\\
\end{aligned}
\end{equation*}
Using above facts,
\begin{equation*}
\begin{aligned}
&\| \ww_{t}  - {\alpha}\ww \|^{2} - {\| \ww_{t+1} - {\alpha}\ww \|}^{2}\\
= \;\; &  {\| \ww_{t} \|}^{2} + {\alpha}^{2}{\| \ww \|}^{2} - 2{\alpha}({\ww \cdot \ww_{t}}) - \Big{[} { \| \ww_{t} \|}^{2} \\
&+ {\eta^2  {d}^2 {y_{t}}^2}{ \| \xx_{t} \| }^{2}[\fc]  \\
&+ {{ \eta^2 (1-d)}^2 {y_{t}}^2 }{ \| \xx_{t} \| }^{2}[\sc] \\ 
&+ {\alpha}^{2}{ \| \ww \| }^{2} + {2 \eta dy_{t}}({\ww_{t}} \cdot \xx_{t})[\fc]  \\
&+ {2 \eta (1-d)y_{t}}({\ww_{t}} \cdot \xx_{t})[\sc] - 2\alpha(\ww \cdot \ww_{t}) \\
& - {2{\alpha} \eta dy_{t}}(\ww \cdot \xx_{t})[\fc]  \\
&- {2{\alpha} \eta (1-d)y_{t}}(\ww \cdot \xx_{t})[\sc ]  \Big{]} \\[6pt]
=\;\;&  {2{\alpha} \eta dy_{t}}(\ww \cdot \xx_{t})[\fc] \\
&+ {2{\alpha} \eta (1-d)y_{t}}(\ww \cdot \xx_{t})[\sc] \\
&-{2 \eta dy_{t}}(\ww_{t} \cdot \xx_{t})[\fc] \\ 
&- {2 \eta (1-d)y_{t}}(\ww_{t} \cdot \xx_{t})[\sc] \\
&-{\eta^2 d^2{\| \xx_{t} \|}^2}[\fc] -{\eta^2 (1-d)^2{\| \xx_{t} \|}^2}[\sc] \end{aligned}
\end{equation*}
Combining the coefficient of $[\fc] \text{ and } [\sc]$,
\begin{equation*}
\begin{aligned}
\| &\ww_{t}- {\alpha}\ww \|^{2} - {\| \ww_{t+1} - {\alpha}\ww \|}^{2} 
=  {2{\alpha} \eta dy_{t}}(\ww \cdot \xx_{t})[ \fc ] \\
&+ {2{\alpha} \eta (1-d)y_{t}}(\ww \cdot \xx_{t})[ \sc ] 
- {2 \eta dy_{t}}(\ww_{t} \cdot \xx_{t})[ \fc ]\\
&- {2 \eta (1-d)y_{t}}(\ww_{t} \cdot \xx_{t})[ \sc ] -{ \eta^2 d^2{\| \xx_{t} \|}^2}[\fc ]\\
&-{ \eta^2 (1-d)^2{\| \xx_{t} \|}^2}[ \sc ] \\[6pt]
= \; & \; [ \fc ]\big{[} {2{\alpha} \eta dy_{t}}(\ww \cdot \xx_{t}) -{2 \eta dy_{t}}(\ww_{t} \cdot \xx_{t}) -{ \eta^2 d^2{\| \xx_{t} \|}^2} \big{]} \\
&+ [ \sc ]\big{[} {2{\alpha} \eta (1-d)y_{t}}(\ww \cdot \xx_{t}) \\
&-{2 \eta (1-d)y_{t}}(\ww_{t} \cdot \xx_{t}) 
-{ \eta^2 (1-d)^2{\| \xx_{t} \|}^2} \big{]}
\end{aligned}
\end{equation*}
Repeating the similar procedure for $\rho$, we get, 
\begin{equation*}
\begin{aligned}
&( {\rho}_{t} - {\alpha}\rho )^{2} - {( {\rho}_{t+1} - {\alpha}\rho )}^{2} 
={( {\rho}_{t} - {\alpha}\rho )}^{2} \\
&- ( {\rho}_{t} - { \eta d}[ \fc ]  + {\eta(1-d)}[ \sc ] - {\alpha}\rho )^{2} \\[6pt]
= \; & \; {  {\rho}_{t}  }^{2} + {\alpha}^2{\rho}^{2} - 2{\alpha}{\rho}{\rho}_{t} - \Big{[} {  {\rho}_{t}  }^{2} + \eta^2 {d^2}[ \fc ] \\
&+ {\eta^2 (1-d)^2}[ \sc ] + {\alpha}^2{\rho}^{2} - {2 \eta d {\rho}_{t}}[\fc] \\
&+ {2 \eta (1-d) {\rho}_{t}}[\sc] - 2{\alpha}{\rho}{\rho}_{t} + {2 {\alpha} \eta d \rho }[\fc] \\
&- {2{\alpha} \eta (1-d)  \rho}[\sc] \Big]\\[6pt]
=\; & \; [\fc] \thinspace \big{[} - \eta^2 d^2 + 2 \eta d{\rho}_{t} - 2 {\alpha} \eta d{\rho} \thinspace \big{]} \\
&+ [\sc]\big{[} - \eta^2 (1-d)^2 \\ 
&- 2 \eta (1-d) {\rho}_{t} + 2{\alpha} \eta (1-d)\rho   \big{]}
\end{aligned}
\end{equation*}
Adding $\| \ww_{t} - {\alpha}\ww \|^{2} - \| \ww_{t+1} - {\alpha}\ww \|^{2}$ and ${( {\rho}_{t} - {\alpha}\rho ) }^{2} - {( {\rho}_{t+1} - {\alpha}\rho )}^{2}$, we get the following.  
\begin{equation*}
\begin{aligned}
&\| \ww_{t} - {\alpha}\ww \|^{2} - \| \ww_{t+1} - {\alpha}\ww \|^{2} + {( {\rho}_{t} - {\alpha}\rho ) }^{2} - {( {\rho}_{t+1} - {\alpha}\rho )}^{2} \\
= \; &\ \; [\fc] \big{[} 2{\alpha} \eta d(y_{t}( \ww \cdot \xx_{t} ) - \rho) \\ 
&- 2 \eta d(y_{t}(\ww_{t} \cdot \xx_{t}) - {\rho}_{t}) - \eta^2 {d^2}( {\| \xx_{t} \|}^2 + 1)) \big{]} \\
&+ [\sc] \big{[} 2{\alpha} \eta (1-d)(y_{t}( \ww \cdot \xx_{t} ) + \rho )\\
&- 2 \eta (1-d)(y_{t}( \ww_{t} \cdot \xx_{t} ) + {\rho}_{t}) 
- \eta^2 (1-d)^2( {\| \xx_{t} \|}^2 + 1) \big{]}
\end{aligned}
\end{equation*}
If $\fc = 1$, then $L_{dr}( y_t (\ww_t \cdot \xx_t), \rho_t)=d[ \thinspace {\rho} + 1 - y_{t}(\ww_t \cdot \xx_{t}) \thinspace ]$. If $\sc = 1$, then $L_{dr}( y_t (\ww_t \cdot \xx_t), \rho_t)=2d + (1-d)[1 - y_{t}( \ww_t \cdot \xx_{t} ) - \rho]$. We use these facts and Lemma 8 and Lemma 9 to get the following.
\begin{equation*}
\begin{aligned}
&\| \ww_{t}- {\alpha}\ww \|^{2} - {\| \ww_{t+1} -} { {\alpha}\ww \|}^{2} + {( {\rho}_{t} - {\alpha}\rho ) }^{2} - {( {\rho}_{t+1} - {\alpha}\rho )}^{2} \\
\geq & \; [\fc] \Bigg{[} \frac{ 2\alpha \eta }{m_{1}} ( m_{1}d - L_{dr}(y_t ( \ww \cdot \xx_t),\rho_{t}) ) \\
&+ 2 \eta (L_{dr}( y_t (\ww_t \cdot \xx_t),\rho_{t}) - d) - \eta^2 d^2({\| \xx_{t} \|}^2 + 1) \Bigg{]}\\
&+ [\sc] \Bigg{[} \frac{2 \alpha \eta}{ m_{22} } ( m_{21}(1 + d) - L_{dr}( y_t(\ww \cdot \xx_t) ,\rho)  )\\
&+ 2 \eta ( L_{dr,\rho}( y_t(\ww_t \cdot \xx_t), \rho_t) - d - 1)
- \eta^2 (1-d)^2({\| \xx_{t} \|}^2 + 1) \Bigg{]}
\end{aligned}
\end{equation*}
Summing the above equation for all $t=1,2,...,T$.
\begin{equation*}
\begin{aligned}
&\sum\limits_{t=1}^{T} [\fc] \Bigg{[} \thinspace \frac{ 2{\alpha} \eta }{m_1}  ( m_1 d - L_{dr}( y_t(\ww \cdot \xx_t),\rho))\\
&+ 2 \eta (L_{dr}(y_t (\ww_t \cdot \xx_t), \rho_t) - d ) - \eta^2 d^2( {\| \xx_{t} \|}^2 + 1 ) \thinspace \Bigg{]} \\
&+ \sum\limits_{t=1}^{T} [\sc]\Bigg{[} \thinspace \frac{2{\alpha} \eta  }{ m_{22} }  \big( m_{21} (1 + d) - L_{dr}( y_{t}(\ww \cdot \xx_t), \rho_t)  \big) \\
&+ 2 \eta (L_{dr}( y_t ( \ww_t \cdot \xx_t ), \rho_t) - d - 1) - \eta^2 (1-d)^2( {\| \xx_{t} \|}^2 + 1 ) \thinspace \big{]} \\[6pt]
&\leq {\| \ww_{1} - {\alpha}\ww \|}^2 - {\| \ww_{T+1} - {\alpha}\ww \|}^2  +  {( {\rho}_{1} - {\alpha}\rho )}^2 - {( {\rho}_{T+1} - {\alpha}\rho )}^2  \\
&\leq  {\| \ww_{1} - {\alpha}\ww \|}^2 + {( {\rho}_{1} - {\alpha}\rho )}^2 \\
&=   {\alpha}^2{ \| \ww \| }^2 + (1 - {\alpha}\rho)^2
\end{aligned}
\end{equation*}
Here, we used the fact that we initialize with $\ww_1=\mathbf{0}$ and $\rho_1=1$. Rearranging terms, we will get required inequality.
\begin{align*}
   & \sum\limits_{t=1}^{T} [\fc]\big{[} \thinspace  2{\alpha} \eta d + 2 \eta (L_{dr}( y_t (\ww_t \cdot \xx_t),\rho_t)  - d ) \\ 
    &- \eta^2 d^2( {\| \xx_{t} \|}^2 + 1 ) \thinspace\big{]} + \sum\limits_{t=1}^{T} [\sc]\Bigg{[} \thinspace \frac{ 2{\alpha} \eta (1 + d) m_{21}}{ m_{22} }  \\
    &+ 2 \eta (L_{dr}( y_t ( \ww_t \cdot \xx_t), \rho_t) - d - 1) - \eta^2 (1-d)^2( {\| \xx_{t} \|}^2 + 1 ) \thinspace \Bigg{]} \\[6pt]
   & \leq {\alpha}^2{ \| \ww \| }^2 + (1 - {\alpha}\rho)^2 + \sum_{t=1}^T \frac{ 2 \alpha \eta }{ m_{1} }  L_{dr}( y_{t} (\ww \cdot \xx_t), \rho_t) [\fc] \\
   &+ \sum_{t=1}^T  \frac{2 \alpha \eta}{ m_{22} } L_{dr}( y_{t} ( \ww \cdot \xx_t ), \rho_{t}) [\sc] \\[6pt]
   & \leq  {\alpha}^2{ \| \ww \| }^2 + (1 - {\alpha}\rho)^2 + \sum_{t=1}^T \frac{ 2 \alpha \eta }{ m }  L_{dr}( y_{t} (\ww \cdot \xx_t), \rho_t)
\end{align*}
where $m = \min( m_{1}, m_{22} )$.

\section{Proof of Theorem~2}
\label{app-sec-theorem-2}
\begin{enumerate}
\item 
Putting $L_{dr}( y_{t} ( \ww \cdot \xx_t ),\rho)$ = 0 in the result of Lemma~1, we will get
\begin{equation}
\label{eq-theorem2-loss-zero}
\begin{aligned}
&\alpha^2  \| \ww \|^2 + (1 - \alpha \rho)^2 \\[6pt]
&\geq \sum\limits_{t=1}^{T} [\fc] \big[ \; 2{\alpha} \eta d + 2 \eta (L_{dr}( y_{t} ( \ww_{t} \cdot \xx_t), \rho_t) - d ) \\
&- \eta^2 d^2( {\| \xx_{t} \|}^2 + 1 ) \; \big] + \sum\limits_{t=1}^{T} [\sc] \Bigg[  \frac{ 2{\alpha} \eta (1 + d) m_{21} }{ m_{22} }  \\
&+ 2 \eta (L_{dr}( y_{t} ( \ww_{t} \cdot \xx_t ),\rho_t) - d - 1) - \eta^2 (1-d)^2( {\| \xx_{t} \|}^2 + 1 ) \Bigg]
\end{aligned}
\end{equation}
We choose the following value of $\alpha$. 
\begin{equation}
\label{eq-alpha-val-loss-zero-reject}
\begin{aligned}
\alpha = \max \begin{cases}
\frac{1 + \eta^2 d^2(R^2 + 1) + 2 \eta d}{2 \eta d}  \\
\frac{ m_{22} \left(1 + \eta^2 (1-d)^2(R^2 + 1) + 2 \eta (1-d) \right) }{2 m_{21} \eta (1+d)} 
\end{cases}
\end{aligned}
\end{equation}
This implies that $\alpha \geq \frac{1 + \eta^2 d^2(R^2 + 1) + 2 \eta d}{2 \eta d}$. Using this inequality in expression of coefficient of $\fc$ in eq.(\ref{eq-theorem2-loss-zero}),
\begin{equation}
\label{eq-alpha-value-proof21}
\begin{aligned}
2{\alpha} \eta d & + 2 \eta (L_{dr}( y_{t} ( \ww_{t} \cdot \xx_t ), \rho_t) - d ) - \eta^2 d^2( {\| \xx_{t} \|}^2 + 1 ) \\[6pt]
\geq   &  \;\; 2 \left( \frac{1 + \eta^2 d^2(R^2 + 1) + 2 \eta d}{2 \eta d} \right) \eta d \\[3pt]
&+ 2 \eta (L_{dr}( y_{t} ( \ww_{t} \cdot \xx_t ), \rho_t) - d ) - \eta^2 d^2( {\| \xx_{t} \|}^2 + 1 ) \\[6pt]
\geq &  \; 1 + \eta^2 d^2 (R^2 - \| \xx_{t} \|^2) + 2 \eta L_{dr}( y_{t} ( \ww_{t} \cdot \xx_t ), \rho_t) \\[6pt]
\geq & \; 1
\end{aligned}
\end{equation}
Moreover, from eq.(\ref{eq-alpha-val-loss-zero-reject}), we can say that $\alpha \geq \frac{ m_{22} \left( 1+ \eta^2 (1-d)^2 (R^2 + 1) + 2 \eta (1-d) \right) }{2 m_{21} \eta (1+d)}$. Using this inequality in coefficient of $R_{2t} + M_{t}$ in eq.(\ref{eq-theorem2-loss-zero}),
\begin{align*}
    &  \frac{ 2{\alpha} \eta  (1 + d) m_{21}}{ m_{22} }  + 2 \eta (L_{dr}( y_{t} ( \ww_{t} \cdot \xx_t ), \rho_t) - d - 1) \\[3pt]
     & \;\;\;\; - \eta^2 (1-d)^2( {\| \xx_{t} \|}^2 + 1 ) \\[6pt]
     \geq & \;  2 \left(  \frac{ 1 + \eta^2 (1-d)^2 (R^2 + 1) + 2 \eta (1-d) }{2 \eta (1+d)} \right) \eta (1+d) \\[5pt]
    & \;\; + 2 \eta (L_{dr}( y_{t} ( \ww_{t} \cdot \xx_t ), \rho_t) - d - 1) - \eta^2 (1-d)^2( {\| \xx_{t} \|}^2 + 1 ) \\[6pt]
     \geq & \; \eta^2(1-d)^2 (R^2 - \| \xx_{t} \|^2) \\[5pt] 
     &+ 2 \eta ( L_{dr}( y_{t} ( \ww_{t} \cdot \xx_t ), \rho_t) - 2d ) + 1 
\end{align*}
When $\sc=1$ then $L_{dr}( y_{t} ( \ww_{t} \cdot \xx_t ), \rho_t) \geq 2d$. Using this inequality,
\begin{equation}
\begin{aligned}
\label{eq-alpha-value-proof22}
     \frac{ 2{\alpha} \eta (1 + d) m_{21} }{ m_{22} } &+ 2 \eta (L_{dr}( y_{t} ( \ww_{t} \cdot \xx_t ), \rho_t) - d - 1) \\[4pt]
     &- \eta^2 (1-d)^2( {\| \xx_{t} \|}^2 + 1 ) \geq 1
\end{aligned}
\end{equation}    
Using eq.(\ref{eq-alpha-value-proof21}) and eq.(\ref{eq-alpha-value-proof22}), for $\alpha$ value given in eq.(\ref{eq-alpha-val-loss-zero-reject}),
\begin{align*}
     \sum_{t=1}^T [ R_{1t} + R_{2t} ] &\leq \sum_{t=1}^T \; [ C_{t} + R_{1t} ] + \sum_{t=1}^T \; [ R_{2t} + M_{t} ] \\[6pt]
     &\leq \alpha^2 \| \ww \|^2 + (1 - \alpha \rho)^2 \\
\end{align*}
Here $\alpha = \max \Bigg{(} \frac{1 + \eta^2 d^2(R^2 + 1) + 2 \eta d}{2 \eta d} ,\frac{ m_{22} \left( 1+ \eta^2 (1-d)^2(R^2 + 1) + 2 \eta (1-d) \right) }{2 \eta m_{21} (1+d)} \Bigg{)}$.
\item Putting $L_{dr}( y_{t} ( \ww \cdot \xx_t ), \rho) = 0,\;\forall t\in[T]$ in Lemma 1, we will get
\begin{equation}
\label{eq-theorem1-zero-loss}
\begin{aligned}
&{\alpha}^2 { \| \ww \| }^2 + (1 - {\alpha}\rho)^2 \\
&\geq \sum\limits_{t=1}^{T} [\fc][ \thinspace 2{\alpha} \eta d  + 2 \eta (L_{dr}( y_{t} ( \ww_{t} \cdot \xx_t ), \rho_t) - d ) \\ 
&- \eta^2 d^2( {\| \xx_{t} \|}^2 + 1 ) \thinspace] + \sum\limits_{t=1}^{T} [\sc] \Bigg[ \thinspace \frac{ 2{\alpha} \eta (1 + d) m_{21} }{ m_{22} }  \\
&+ 2 \eta (L_{dr}( y_{t} ( \ww_{t} \cdot \xx_t), \rho_t) - d - 1) - \eta^2 (1-d)^2( {\| \xx_{t} \|}^2 + 1 ) \thinspace \Bigg] \\[6pt]
\end{aligned}
\end{equation}
Now, take 
\begin{equation}
\label{eq-alpha-val-loss-zero-mistake}
\alpha = \max \begin{cases}
 \frac{ \eta d(R^2 + 1) + 2}{2 } \\
 \frac{ m_{22} \left( 1+ \eta^2 (1-d)^2(R^2 + 1) + 2 \eta (1-d) \right) }{2 \eta m_{21} (1+d)} 
 \end{cases}
\end{equation}
This implies that $\alpha \geq \frac{ \eta d(R^2 + 1) + 2}{2}$. Using this inequality in the expression of coefficient of $\fc$ in eq.(\ref{eq-theorem1-zero-loss}), 
\begin{equation}
\label{eq-coeff1-geq-zero}
\begin{aligned}
&2{\alpha} \eta  d + 2 \eta (L_{dr}( y_{t} ( \ww_{t} \cdot \xx_t ), \rho_t) - d ) - \eta^2  d^2( {\| \xx_{t} \|}^2 + 1 ) \\
&\geq 2 \left( \frac{ \eta d(R^2 + 1) + 2}{2} \right) \eta d + 2 \eta (L_{dr}( y_{t} (\ww_{t} \cdot \xx_t), \rho_t) - d) \\
&- \eta^2 d^2( {\| \xx_{t} \|}^2 + 1 )\\
 = & \; \eta^2 d^2( R^2 - {\| \xx_{t} \|}^2 ) + 2 \eta L_{dr}( y_{t} ( \ww_{t} \cdot \xx_t ), \rho_t) \\
& \geq  0, \;\forall t \in [T]
\end{aligned}
\end{equation}
Value of $\alpha$ in eq.(\ref{eq-alpha-val-loss-zero-mistake}) also implies that $\alpha \geq \frac{ m_{22} \left( 1+ \eta^2 (1-d)^2(R^2 + 1) + 2 \eta (1-d) \right) }{2 m_{21} \eta(1+d)}$. Using this inequality in the expression of coefficient of $\sc$ in eq.(\ref{eq-theorem1-zero-loss}), 
\begin{equation}
\label{eq-coeff2-geq-one}
\begin{aligned}
&  \frac{ 2\alpha \eta (1+d) m_{21} }{ m_{22} }  + 2 \eta ( L_{dr}( y_{t} (\ww_{t} \cdot \xx_t), \rho_t) - d - 1 ) \\[5pt]
&- \eta^2 (1-d)^2( {\| \xx_{t} \|}^2 + 1 ) \\[4pt]
&\geq  2\left( \frac{1+ \eta^2 (1-d)^2(R^2 + 1) + 2\eta(1-d)}{2 \eta (1+d)} \right) \eta (1+d) \\
&+ 2 \eta (L_{dr}( y_{t} ( \ww_{t} \cdot \xx_t ), \rho_t) - d -1) - \eta^2 (1-d)^2( {\| \xx_{t} \|}^2 + 1 ) \\
&= 1 + \eta^2 (1-d)^2( R^2 - {\| \xx_{t} \|}^2 ) \\ 
&+ 2 \eta (L_{dr}( y_{t} ( \ww_{t} \cdot \xx_t ), \rho_t) - 2d) \geq  1 
\end{aligned}
\end{equation}

From eq.(\ref{eq-coeff1-geq-zero}) and (\ref{eq-coeff2-geq-one}), we can say that using value of $\alpha$ given in eq.(\ref{eq-alpha-val-loss-zero-mistake}) will result into coefficient of $\fc$ greater than equal to 0 and coefficient of $\sc$ greater than equal to 1.
\begin{equation*}
\begin{aligned}
&\sum\limits_{t=1}^{T}  M_{t} \leq  \sum\limits_{t=1}^{T}[\sc]\\
&\leq \sum\limits_{t=1}^{T} [\sc] \Bigg[ \thinspace \frac{ 2{\alpha} \eta (1 + d) m_{21} }{ m_{22} }  + 2 \eta (L_{dr}( y_{t} ( \ww_{t} \cdot \xx_t ), \rho_t) \\ 
&- d - 1) - \eta^2 (1-d)^2( {\| \xx_{t} \|}^2 + 1 ) \thinspace ] \Bigg] \\
&\leq  {\alpha}^2{ \| \ww \| }^2 + (1 - {\alpha}\rho)^2 
\end{aligned}
\end{equation*}
\end{enumerate}

\section{Proof of Theorem 3}
\label{app-sec:theorem-3}
\begin{enumerate}
    \item 
According to lemma 1,
\begin{equation}
\begin{aligned}
\label{eq-theorem2-loss-nonzero}
&\sum\limits_{t=1}^{T}  [\fc]\big{[} \thinspace 2{\alpha} \eta d + 2 \eta (L_{dr}( y_{t} ( \ww_{t} \cdot \xx_t ), \rho_t) - d ) \\
&- \eta^2 d^2( {\| \xx_{t} \|}^2 + 1 ) \thinspace\big{]} + \sum\limits_{t=1}^{T} [\sc]\Bigg{[} \thinspace \frac{ 2{\alpha} \eta (1 + d) m_{21} }{ m_{22} }  \\
&+ 2 \eta (L_{dr}( y_{t} ( \ww_{t} \cdot \xx_t ), \rho_t) - d - 1) \\[5pt] 
&- \eta^2 (1-d)^2( {\| \xx_{t} \|}^2 + 1 ) \thinspace \Bigg{]} \\
&\leq {\alpha}^2{ \| \ww \| }^2 + (1 - {\alpha}\rho)^2 + \sum_{t=1}^T \frac{ 2 \alpha \eta }{ m } L_{dr}( y_{t} ( \ww \cdot \xx_t ), \rho)
\end{aligned}
\end{equation}

Now, taking value of $\alpha$ as 
\begin{equation}
\begin{aligned}
\label{eq-alpha-val-loss-nonzero-reject}
\alpha = \max \begin{cases}
\frac{1 + \eta^2 d^2(R^2 + 1) + 2 \eta d}{2 \eta d}  \\
\frac{ m_{22} \left( 1 + \eta^2 (1-d)^2(R^2 + 1) + 2 \eta (1-d) \right) }{2 \eta (1+d) m_{21} }
\end{cases}
\end{aligned}
\end{equation}
This implies that $\alpha \geq \frac{1 + \eta^2 d^2(R^2 + 1) + 2 \eta d}{2 \eta d}$. Using this inequality in expression of coefficient of $\fc$ in eq.(\ref{eq-theorem2-loss-nonzero}),
\begin{equation}
\label{eq-value-proof21}
\begin{aligned}
&2{\alpha} \eta  d + 2 \eta (L_{dr}( y_{t} ( \ww_{t} \cdot \xx_t ), \rho_t) - d ) - \eta^2 d^2( {\| \xx_{t} \|}^2 + 1 ) \\
&\geq  2 \left( \frac{1 + \eta^2 d^2(R^2 + 1) + 2 \eta d}{2 \eta d} \right) \eta d \\
&+ 2 \eta (L_{dr}( y_t ( \ww_{t} \cdot \xx_t), \rho_t) - d ) - \eta^2 d^2( {\| \xx_{t} \|}^2 + 1 ) \\
&\geq  1 + \eta^2 d^2 (R^2 - \| \xx_{t} \|^2) + 2 \eta L_{dr}( y_{t} ( \ww_{t} \cdot \xx_t ), \rho_t) \\
&\geq  1
\end{aligned}
\end{equation}
Moreover, from eq.(\ref{eq-alpha-val-loss-nonzero-reject}), we can say that $\alpha \geq \frac{ m_{22} (1+ \eta^2 (1-d)^2 (R^2 + 1) + 2 \eta (1-d)) }{2 \eta (1+d) m_{21} }$. Using this inequality in coefficient of $R_{2t} + M_{t}$ in eq.(\ref{eq-theorem2-loss-nonzero}),

\begin{align*}
   & \frac{ 2{\alpha} \eta  (1 + d) m_{21} }{ m_{22} }  + 2 \eta (L_{dr}( y_{t}(\ww_{t} \cdot \xx_t), \rho_t) - d - 1) \\[5pt]
    &- \eta^2 (1-d)^2( {\| \xx_{t} \|}^2 + 1 ) \\
    & \geq  2 \left(  \frac{ 1 + \eta^2 (1-d)^2 (R^2 + 1) + 2 \eta (1-d) }{2 \eta (1+d)} \right) \eta (1+d) \\[5pt]
    & + 2 \eta (L_{dr}( y_{t} ( \ww_{t} \cdot \xx_t ), \rho_t) - d - 1) - \eta^2 (1-d)^2( {\| \xx_{t} \|}^2 + 1 ) \\[5pt]
    & \geq \eta^2(1-d)^2 (R^2 - \| \xx_{t} \|^2) + 2 \eta ( L_{dr}( y_{t} ( \ww_{t} \cdot \xx_t), \rho_t) - 2d ) + 1\\[6pt]
\end{align*}
When $\sc=1$ then $L_{dr}( y_{t} ( \ww_{t} \cdot \xx_t), \rho_t) \geq 2d$. Using this inequality,
\begin{equation}
\begin{aligned}
\label{eq-alpha-value-proof23}
     & \frac{ 2{\alpha} \eta (1 + d) m_{21} }{ m_{22} }  + 2 \eta (L_{dr}( y_{t} ( \ww_{t} \cdot \xx_t ), \rho_t) - d - 1) \\[3pt]
     &- \eta^2 (1-d)^2( {\| \xx_{t} \|}^2 + 1 ) \geq 1
\end{aligned}
\end{equation}

Using eq.(\ref{eq-value-proof21}) and eq.(\ref{eq-alpha-value-proof23}), for $\alpha$ value given in eq.(\ref{eq-alpha-val-loss-nonzero-reject}),
\begin{align*}
     &\sum_{t=1}^T [ R_{1t} + R_{2t} ] \leq \sum_{t=1}^T \; [ C_{t} + R_{1t} ] + \sum_{t=1}^T \; [ R_{2t} + M_{t} ] \\
     & \leq \alpha^2 \| \ww \|^2 + (1 - \alpha \rho)^2 +  \sum_{t=1}^T  \frac{ 2\eta \alpha }{ m }  L_{dr} ( y_{t} ( \ww\cdot \xx_{t} ), \rho) \\
\end{align*}
Here, we used following value of $\alpha$.
\begin{align*}
    \alpha = \max \begin{cases}
      \frac{1 + \eta^2 d^2(R^2 + 1) + 2 \eta d}{2 \eta d} \\
      \frac{ m_{22} \left( 1+ \eta^2 (1-d)^2(R^2 + 1) + 2 \eta (1-d) \right) }{2 \eta (1+d) m_{21} } 
      \end{cases}
\end{align*}.

\item 
According to Lemma~1,
\begin{equation}
\label{eq-theorem1-nonzero-loss}
\begin{aligned}
&\sum\limits_{t=1}^{T}  [\fc]\big{[} \thinspace 2{\alpha} \eta d + 2 \eta (L_{dr}( y_{t} ( \ww_{t} \cdot \xx_t ), \rho_t) - d ) \\ 
&- \eta^2 d^2( {\| \xx_{t} \|}^2 + 1 ) \thinspace \big{]} + \sum\limits_{t=1}^{T} [\sc]\Bigg{[} \frac{ 2{\alpha} \eta (1 + d) m_{21} }{ m_{22} }  \\
&+ 2 \eta (L_{dr}( y_{t} ( \ww_{t} \cdot \xx_t ), \rho_t) - d - 1) - \eta^2 (1-d)^2( {\| \xx_{t} \|}^2 + 1 )  \Bigg{]} \\
&\leq {\alpha}^2{ \| \ww \| }^2 + (1 - {\alpha}\rho)^2 + \sum_{t=1}^T \frac{ 2 \alpha \eta }{ m }  L_{dr}( y_{t} ( \ww \cdot \xx_t), \rho_t)
\end{aligned}
\end{equation}
Now, take 
\begin{equation}
\label{eq-alpha-val-loss-nonzero-mistake}
\alpha = \max \begin{cases}
\frac{ \eta d(R^2 + 1) + 2}{2 } \\
\frac{ m_{22} ( 1+ \eta^2 (1-d)^2(R^2 + 1) + 2 \eta (1-d) ) }{2 \eta m_{21} (1+d)} 
\end{cases}
\end{equation}
This implies that $\alpha \geq \frac{ \eta d(R^2 + 1) + 2}{2}$. Using this inequality in the expression of coefficient of $\fc$ in eq.(\ref{eq-theorem1-nonzero-loss}), 
\begin{equation}
\label{eq-second-coeff1-geq-zero}
\begin{aligned}
&2{\alpha} \eta d    + 2 \eta (L_{dr}( y_{t} ( \ww_{t} \cdot \xx_t), \rho_t) - d ) - \eta^2 d^2( {\| \xx_{t} \|}^2 + 1 ) \\
& \geq 2 \Big{(} \frac{ \eta d(R^2 + 1) + 2}{2} \Big{)} \eta d + 2 \eta (L_{dr}( y_{t} (\ww_{t} \cdot \xx_t), \rho_t) - d) \\
 &-  \eta^2 d^2( {\| \xx_{t} \|}^2 + 1 )\\
 &=  \eta^2 d^2( R^2 - {\| \xx_{t} \|}^2 ) + 2 \eta L_{dr}( y_{t} ( \ww_{t} \cdot \xx_t ), \rho_t) \\
& \geq  0, \; \forall t \in [T]
\end{aligned}
\end{equation}
Value of $\alpha$ in eq.(\ref{eq-alpha-val-loss-nonzero-mistake}) also implies that $\alpha \geq \frac{ m_{22} \left( 1+ \eta^2 (1-d)^2(R^2 + 1) + 2 \eta (1-d) \right) }{2 \eta (1+d) m_{21} }$. Using this inequality in the expression of coefficient of $\sc$ in eq.(\ref{eq-theorem1-nonzero-loss}), 
\begin{equation}
\label{eq-second-coeff2-geq-one}
\begin{aligned}
& \frac{ 2\alpha \eta  (1+d) m_{21} }{ m_{22} }  + 2 \eta ( L_{dr}( y_{t} ( \ww_{t} \cdot \xx_t ), \rho_t) - d - 1 ) \\
&- \eta^2 (1-d)^2( {\| \xx_{t} \|}^2 + 1 ) \\
&\geq  2\Bigg{(} \frac{1+ \eta^2 (1-d)^2(R^2 + 1) + 2\eta(1-d)}{2 \eta (1+d)} \Bigg{)} \eta (1+d) \\
&+ 2 \eta (L_{dr}( y_{t} ( \ww_{t} \cdot \xx_t ), \rho_t) - d -1) \\
&- \eta^2 (1-d)^2( {\| \xx_{t} \|}^2 + 1 ) \\
&= 1 + \eta^2 (1-d)^2( R^2 - {\| \xx_{t} \|}^2 ) \\
&+ 2 \eta (L_{dr}( y_{t} ( \ww_{t} \cdot \xx_t), \rho_t) - 2d) \\[4pt]
& \geq  1
\end{aligned}
\end{equation}
From eq.(\ref{eq-second-coeff1-geq-zero}) and (\ref{eq-second-coeff2-geq-one}), we can say that using value of $\alpha$ given in eq.(\ref{eq-alpha-val-loss-nonzero-mistake}) will result into coefficient of $\fc$ greater than equal to 0 and coefficient of $\sc$ greater than equal to 1.
\begin{equation*}
\begin{aligned}
\sum\limits_{t=1}^{T} & M_{t} \leq  \sum\limits_{t=1}^{T}[\sc] \\ 
\leq &  \sum\limits_{t=1}^{T} [\sc]  \Bigg[ \thinspace \frac{ 2{\alpha} \eta (1 + d) m_{21} }{ m_{22} }  + 2 \eta (L_{dr}( y_{t} ( \ww_{t} \cdot \xx_t ), \rho) \\
&- d - 1) - \eta^2 (1-d)^2( {\| \xx_{t} \|}^2 + 1 ) \thinspace \Bigg] \\
\leq & \; {\alpha}^2{ \| \ww \| }^2 + (1 - {\alpha}\rho)^2 +  \sum_{t=1}^T \frac{ 2\eta \alpha }{ m }  L_{dr}( y_{t} ( \ww\cdot \xx_{t} ), \rho)
\end{aligned}
\end{equation*}

\end{enumerate}

\section{Proof of Lemma 4}
\label{app-sec:lemma-4}
To prove $\beta-$smoothness property of $L_{ds}(yf(\xx), \rho)$, we will first get Hessian matrix of $L_{ds}(yf(\xx), \rho)$ (i.e. $\nabla^2 L_{ds}(yf(\xx), \rho)$). 
\begin{equation}
\label{eq:def-yfx}
\begin{aligned}
&\frac{ \partial L_{ds}(yf(\xx), \rho ) }{ \partial \ww }  = -2d\gamma y \xx \big[ \ms ( 1 - \ms )  \big]  \\[3pt]
&\;\;\;-2(1-d)\gamma y \xx \big[ \ps ( 1 - \ps ) \big]
\end{aligned}
\end{equation}

\begin{equation}
\label{eq:def-rho}
\begin{aligned}
&\frac{ \partial L_{ds}(yf(\xx), \rho) }{ \partial  \rho  }  = 2d\gamma \big[ \ms (1 - \ms)  \big]  \\[3pt]
&\;\;\;-2(1-d)\gamma \big[ \ps (1 - \ps) \big]
\end{aligned}
\end{equation}

Now, taking double derivative of $L_{ds} ( y f(\xx), \rho)$,
\begin{align}
  \nonumber   &\frac{ \partial^2 L_{ds} }{ \partial \ww^2 } = 2 d \gamma^2 \xx \xx^{T} \Big[ \ms ( 1 - \ms ) \\
  \nonumber  & -2  \sigma^2(  yf(\xx)  - \rho ) (1 - \ms) \Big] \\
  \nonumber  & +2 (1-d)  \gamma^2 \xx \xx^{T} \Big[ \ps (1 - \ps) \\ 
    & - 2 \sigma^2   ( yf(\xx) + \rho)(1 - \ps) \Big] \label{eq:double-def-yfx-yfx}
    \end{align}

\begin{equation}
\label{eq:double-def-yfx-rho}
    \begin{aligned}
    \frac{ \partial^2 L_{ds} }{ \partial \ww  \; \partial \rho } &= 2 d \gamma^2 y \xx \Big[ - \ms ( 1 - \ms ) \\
    & +2 \sigma^2( yf(\xx) - \rho ) (1 - \ms) \Big] \\
    & +2 (1-d) \gamma^2 y \xx \Big[ \ps (1 - \ps) \\ 
    & - 2\sigma^2 ( yf(\xx) + \rho ) (1 - \ps) \Big]
    \end{aligned}
\end{equation}

\begin{equation}
\label{eq:double-def-rho-yfx}
    \begin{aligned}
    \frac{ \partial^2 L_{ds} }{ \partial \rho \; \partial \ww  } &= 2 d \gamma^2  y \xx \Big[ -\ms ( 1 - \ms ) \\
    & +2 \sigma^2( yf(\xx) - \rho ) (1 - \ms) \Big] \\
    & +2 (1-d) \gamma^2 y \xx \Big[ \ps (1 - \ps) \\ 
    & - 2\sigma^2 ( yf(\xx) + \rho ) (1 - \ps) \Big]
    \end{aligned}
\end{equation}

\begin{equation}
\label{eq:double-def-rho-rho}
    \begin{aligned}
    \frac{ \partial^2 L_{ds} }{ \partial \rho^2 } =& 2 d \gamma^2 \Big[ \ms ( 1 - \ms ) \\
    & -2 \sigma^2( yf(\xx) - \rho ) (1 - \ms) \Big] \\
    & +2 (1-d) \gamma^2 \Big[ \ps (1 - \ps) \\ 
    & - 2\sigma^2 ( yf(\xx) + \rho ) (1 - \ps) \Big]
    \end{aligned}
\end{equation}

Using eq.(\ref{eq:double-def-yfx-yfx}), (\ref{eq:double-def-yfx-rho}), (\ref{eq:double-def-rho-yfx}) and (\ref{eq:double-def-rho-rho}), we can construct Hessian matrix $\nabla^2 L_{ds} (yf(\xx), \rho)$.

\begin{equation}
    \begin{aligned}
    \nabla^2 L_{ds} ( yf(\xx), \rho ) = 
    \begin{pmatrix}
    \frac{ \partial^2 L_{ds}(yf(\xx), \rho) }{ \partial \ww^2 } & \frac{ \partial^2 L_{ds}(yf(\xx), \rho) }{ \partial \ww \; \partial \rho }\\[7pt]
    \frac{ \partial^2 L_{ds}(yf(\xx), \rho) }{ \partial \rho \; \partial \ww } & \frac{ \partial^2 L_{ds}(yf(\xx), \rho) }{ \partial \rho^2 }
    \end{pmatrix}
    \end{aligned}
\end{equation}

We know that upper bound on the spectral norm of the Hessian matrix $\nabla^2 L_{ds} ( yf(\xx), \rho )$ is smoothness constant $\beta$ of $L_{ds} ( yf(\xx), \rho )$. To upper bound the spectral norm of the Hessian matrix, we use following inequality. 
\begin{equation}
\label{eq:l2-frob-norm-inequality}
    \| \nabla^2  L_{ds} ( yf(\xx), \rho ) \|_{2} \leq \| \nabla^2 L_{ds} ( yf(\xx), \rho ) \|_{F}
\end{equation}
where $\|.\|_{2}$ stands for spectral norm and $\|.\|_{F}$ stands for Frobenius norm.
We can think $\ms ( 1 - \ms ) - 2 \sigma^2( yf(\xx) - \rho ) (1 - \ms)$ as a cubic polynomial in $\ms$. Now, using fact that $\ms \in [0, 1]$, we get range of the polynomial as [-0.1, 0.1]. In the same manner, we can get range of $\ps ( 1 - \ps ) - 2 \sigma^2( yf(\xx) + \rho ) (1 - \ps)$ as [-0.1, 0.1] therefore,  We can use $| \ms ( 1 - \ms ) - 2 \sigma^2( yf(\xx) - \rho ) (1 - \ms) | \leq 0.1 $ and $| \ps ( 1 - \ps ) - 2 \sigma^2( yf(\xx) + \rho ) (1 - \ps) | \leq 0.1 $. Using $\| \xx \| \leq \RR$, we get that 
\begin{equation}
\label{eq:frob-norm-bound}
    \begin{aligned}
    \| \nabla^2 L_{ds} ( yf(\xx), \rho ) \|_{F} \leq \frac{\gamma^2}{5} \big[ \RR^2 + 1 \big]
    \end{aligned}
\end{equation}
Using eq.(\ref{eq:l2-frob-norm-inequality}) and eq.(\ref{eq:frob-norm-bound}), we can say that $L_{ds} ( yf(\xx) , \rho)$ is $\beta-$smooth with smoothness constant $\beta = \frac{ \gamma^2 }{5} \big[ \RR^2 + 1 \big] $.

\section{Proof of Theorem 6}
\label{app-sec:theorem-6}
Let $\Theta = [\ww \;  \rho]$. Using smoothness property of $L_{ds}(y_{t}f_{t}(\xx_{t}), \rho_{t} )$, 

\begin{equation}
    \begin{aligned}
    L_{ds}& ( y_{t}f_{t+1}( \xx_{t} ), \rho_{t}) - L_{ds}( y_{t}f_{t}( \xx_{t} ), \rho_{t} ) \\ 
    &\leq \Big( \nabla_{\P} L_{ds}( y_{t} f_{t} (\xx_{t}), \rho_{t} ) \cdot  \big( \Theta_{t+1} - \Theta_{t} \big) \Big) \\ 
    &+ \frac{\beta}{2} \| \Theta_{t+1} - \Theta_{t} \|^2 
    \end{aligned}
\end{equation}

We know that $\Theta_{t+1} - \Theta_{t} = - \eta z_{t} \nabla_{\Theta} L_{ds}( y_{t} f_{t} (\xx_{t}), \rho_{t} )$ where $\eta$ is step-size. As $z_{t} \in \{0, 1\}$, we use fact that $z_{t}^2 = z_{t}$.

\begin{equation*}
    \begin{aligned}
    & L_{ds}  (  y_{t} f_{t+1} (\xx_{t}), \rho_{t} ) - L_{ds}( y_{t} f_{t} (\xx_{t}), \rho_{t} ) \\[3pt]  
    & \;\; \leq  \; - \eta z_{t} \| \nabla L_{ds}(y_{t} f_{t} (\xx_{t}), \rho_{t}) \|^2 + \frac{\beta z_{t} \eta^2 }{2} \|  \nabla L_{ds}( y_{t} f_{t} (\xx_{t}), \rho_{t} ) \|^2
    \end{aligned} 
\end{equation*}

Taking $\E_{z}$ on both side, we will get following equation.  
\begin{equation*}
    \begin{aligned}
    & \E_{z} \Big[ L_{ds}  (  y_{t} f_{t+1} (\xx_{t}), \rho_{t} ) - L_{ds}( y_{t} f_{t} (\xx_{t}), \rho_{t} ) \Big] \\[3pt]  
    & \;\; \leq  \; \Big( - \eta + \frac{\beta  \eta^2 }{2} \Big) \|  \nabla L_{ds}( y_{t} f_{t} (\xx_{t}), \rho_{t} ) \|^2 \E_{z}[ z_{t} ]
    \end{aligned} 
\end{equation*}

Using $\E_{z}[ z_{t} ] = p_{t} \leq 1$,
\begin{equation*}
    \begin{aligned}
    & \E_{z} \Big[ L_{ds}  (  y_{t} f_{t+1} (\xx_{t}), \rho_{t} ) - L_{ds}( y_{t} f_{t} (\xx_{t}), \rho_{t} ) \Big] \\[3pt]  
    & \;\; \leq  \; \Big( - \eta + \frac{\beta  \eta^2 }{2} \Big) \|  \nabla L_{ds}( y_{t} f_{t} (\xx_{t}), \rho_{t} ) \|^2
    \end{aligned} 
\end{equation*}

Multiplying above equation with -1 on both side,

\begin{equation*}
    \begin{aligned}
    &  \Big( \eta  - \frac{\beta \eta^2 }{2} \Big)  \| \nabla L_{ds}(y_{t} f_{t} (\xx_{t}), \rho_{t}) \|^2  \\[5pt]
    & \;\; \leq   \; \E_{z} \Big[ L_{ds}  (  y_{t} f_{t+1} (\xx_{t}), \rho_{t} ) - L_{ds}( y_{t} f_{t} (\xx_{t}), \rho_{t} ) \Big] \\[3pt]   
    & \;\;\ \leq \; \E_{z} \Big[ L_{ds}  (  y_{t+1} f_{t+1} (\xx_{t+1}), \rho_{t+1} ) - L_{ds}( y_{t} f_{t} (\xx_{t}), \rho_{t} ) \\[4pt] 
    &+ L_{ds}  (  y_{t} f_{t+1} (\xx_{t}), \rho_{t} ) - L_{ds}  (  y_{t+1} f_{t+1} (\xx_{t+1}), \rho_{t} )  \Big] \\[4pt]
    & \;\; \leq \; \E_{z} \Big[ L_{ds}  (  y_{t+1} f_{t+1} (\xx_{t+1}), \rho_{t+1} ) - L_{ds}( y_{t} f_{t} (\xx_{t}), \rho_{t} ) + 2 \Big]
    \end{aligned} 
\end{equation*}

Taking sum over $t=1,...,T$, we will get
\begin{equation}
    \sum_{t=1}^T \| \nabla L_{ds}( y_{t}f_{t}(\xx_{t}), \rho_{t} ) \|^2 = \mathcal{R}(T) \leq \frac{2T + 2}{ \eta - \frac{\beta \eta^2}{2} }
\end{equation}

Taking $\eta = \frac{1}{\beta}$, we will get
\begin{equation}
    \sum_{t=1}^T \| \nabla L_{ds}( y_{t}f_{t}(\xx_{t}), \rho_{t} ) \|^2 = \mathcal{R}(T) \leq 4 \beta ( T + 1 )
\end{equation}

Using $\beta = \frac{\gamma^2}{5} (\RR^2 + 1)$, we will get following local regret. 

\begin{equation}
    \mathcal{R}(T) \leq \frac{4 \gamma^2}{5} (\RR^2 +1)(T + 1) 
\end{equation}

\section{Active Learning of Non-Linear Reject Option Classifiers Based on Double Sigmoid Loss}
\label{app-sec:DSAL-kernel}

The query probability function and $\rho-$update equation of non-linear double sigmoid active learning algorithm is same as query update of the linear algorithm. They are as follows. 
\begin{align}
    \label{eq:proper-bimodal-probability}
    \text{probability } p_{t} = 4 \; \sigma ( | f_{t}( \xx_{t} ) | - \rho_{t} )\left( 1 - \sigma( | f_{t} (\xx_{t}) |  - \rho_{t}) \right)
\end{align}

\begin{align}
    \label{eq:proper-bimodal-rho-update}
     \nonumber \text{Update}(\rho) &= - 2\alpha \Big{[}d  \sigma ( y_{t} f_{t}( \xx_{t} ) - \rho_{t} )\left( 1 - \sigma( y_{t} f_{t}( \xx_{t} )  - \rho_{t}) \right)\\
    & \;\;\;\;\;- (1 - d)  \sigma ( y_{t} f_{t}( \xx_{t} ) + \rho_{t} )\left( 1 - \sigma( y_{t} f_{t}( \xx_{t} ) + \rho_{t}) \right)  \Big{]}
\end{align}{}

\begin{algorithm}
\begin{algorithmic}
\caption{Kernalized active learning algorithm using Double sigmoid loss function}
\State {\bf Input:} $d \in (0, 0.5)$, step size $\eta$. 
\State {\bf Output:} Weight vector $\ww$, Rejection width $\rho$.
\State {\bf Initialize:} $\ww_{1}, \rho_{1}$
\For{$t = 1,..,T$} 
\State Sample $\xx_{t} \in \mathbb{R}^{d}$
\State Set $f_{t-1}(\xx) =  \sum_{i=1}^{t-1} a_{i} \K(\xx_{i}, \xx) $ 
\State Set $p_{t}$ using eq.(\ref{eq:proper-bimodal-probability}).  Draw a Bernouli random variable $z_{t} \in \{0, 1\}$ of parameter $p_{t}$.
\If{$z_{t} == 1$}
\State Query for correct label $y_{t}$ of $\xx_t$.
\State $a_{t} = 2y_{t}  \alpha \big[  d\sigma ( y_{t} f_{t-1}( \xx_{t} )  - \rho_{t} )\left( 1 - \sigma( y_{t} f_{t-1} ( \xx_{t} )   - \rho_{t}) \right) + (1 - d)\sigma ( y_{t} f_{t-1} ( \xx_{t} )  + \rho_{t} )\left( 1 - \sigma( y_{t} f_{t-1} ( \xx_{t} )   + \rho_{t}) \right)  \big]
$
\State Update $\rho_{t}$ using eq.(\ref{eq:proper-bimodal-rho-update}). ($\rho_{t+1} = \rho_{t} + \text{Update}(\rho))$)
\Else{}
\State $a_{t} = 0$.
\State $\rho_{t+1} = \rho_{t}$.
\EndIf
\EndFor
\end{algorithmic}
\end{algorithm}

\end{document}